\documentclass[11pt]{amsart}
\usepackage[margin=1.0in]{geometry}
\usepackage[lite]{amsrefs}
\usepackage{amssymb}
 \usepackage{graphicx}
 \usepackage[colorlinks,linkcolor=red, citecolor=blue]{hyperref}
\usepackage[capitalize]{cleveref}
\numberwithin{equation}{section}
\usepackage{tcolorbox}
\usepackage[normalem]{ulem}
\crefformat{equation}{(#2#1#3)}



\usepackage{mydefs_stoc}

\title{A PDE-Based Analysis of the Symmetric Two-Armed Bernoulli Bandit}

\author{Vladimir A. Kobzar}\address{Department of Applied Physics and Applied Mathematics, Columbia University, 
New York, NY} 
  \email{vak2116@columbia.edu} 
\author {Robert V. Kohn}\address{Courant Institute of Mathematical Sciences, New York University, New York, NY }
  \email{kohn@cims.nyu.edu}

\begin{document}

\begin{abstract}%
This work addresses a version of the two-armed Bernoulli bandit problem where the sum of the means of the arms is one (the symmetric two-armed Bernoulli bandit).  In a regime where the gap between these means goes to zero as the number of prediction periods approaches infinity,  i.e., the difficulty of detecting the gap increases as the sample size increases,  we obtain the leading order terms of the  minmax optimal regret and pseudoregret for this problem by associating each of them with a solution of a linear heat equation. Our results improve upon the previously known results; specifically, we explicitly compute these leading order terms in three different scaling regimes for the gap. Additionally, we obtain new non-asymptotic bounds for any given time horizon.   Although  optimal player strategies are not known for more general bandit problems, there is significant interest in considering how regret accumulates under specific player strategies, even when they are not known to be optimal. We expect that the methods of this paper should be useful in settings of that type. 
\end{abstract}

\maketitle



\section{Introduction}
\label{sec:intro}

The \emph{multi-armed bandit} is a classic sequential prediction problem. At each round, the predictor (\emph{player}) selects a probability distribution from a finite collection of  distributions (\emph{arms}) with the goal of minimizing the difference (\emph {regret}) between the player's rewards sampled from the selected arms and the rewards of the best performing arm at the final round.  The player's choice of the arm and the reward sampled from that arm in that round are revealed to the player, and this prediction process is repeated until the final round. 

Since the rewards of the arms that are not sampled are not revealed to the player, this is an \emph{incomplete information} problem.  This leads to a principal challenge in devising player strategies for multi-armed bandits: balancing exploration of different arms with the exploitation of the information gathered during the earlier periods.  However, in the case of a two-armed Bernoulli bandit where the arms are distributed symmetrically, i.e., each arm is distributed independently according to a Bernoulli distribution and the sum of the means of the arms is one (\emph{symmetric two-armed Bernoulli bandit}), this challenge is not present. In this case, sampling from one arm is statistically equivalent to sampling from the other arm. 

The optimal player strategy in this setting is, perhaps, not difficult to guess; but we appear to be the first to give a proof of its optimality in the \emph{minimax} setting. Also, even in this simplified setting, the incomplete information aspect of the problem is remains, and the optimal regret has not been determined previously.  Accordingly, we develop a fresh PDE-based perspective on the symmetric two-armed Bernoulli bandit problem and apply it to determine the leading order term of optimal regret  when the gap between these means of the arms goes to zero as the number of prediction periods approaches infinity,  i.e., the difficulty of detecting the gap increases as the sample size increases.

Although optimal player strategies are not known for most other bandit problems, there is significant interest in considering how regret accumulates under  specific player strategies, even when they are not known to be optimal.  We expect that the methods of this paper should be useful in settings of  that type. Accordingly our primary algorithmic contribution is  a methodological advance, which augments the toolkit one can bring to bear on any bandit problem once the (potentially suboptimal) player's strategy has been fixed.
 
Stochastic bandits can be viewed as an interaction between an ``adversary'' that sets the distributions of the arms at the start of the game and the player who plays according to a specific strategy. In the simplified setting of the symmetric two-armed bandit, our methods allow us to obtain a rather complete understanding of how the regret achieved by the optimal player strategy depends on (a) the number of time steps, and (b) the gap between the means of the two arms.  Although the power of our ``adversary'' is  restricted to setting the gap between the arms, there appears to be essentially no method in the literature that allows one to evaluate the regret corresponding to various gap regimes except for the fixed gap and the gap that scales as $\Theta(1/\sqrt {T})$ where $T$ is the number of prediction periods. Our methods  allow for the first time to determine the leading order behavior of the regret in all other scaling regimes for the gap.

While the case of  general bandits is more challenging, since the player needs to balance exploration and exploitation, there are more realistic  settings than the symmetric two-armed bandit in which exploration is \emph{not} needed.\footnote{One may ask if bandit-type problems that do not require exploration should be categorized as ``bandits". The incomplete information aspect of the problems described in the paragraph accompanying this footnote led to highly nontrivial algorithmic questions  despite the lack of exploration. Accordingly, consistently with those references we shall also refer to the present simplified problem as a ``bandit" problem.} For example, reference \cite{feldman} considered a Bayesian two-armed bandit where each arm is distributed according to an arbitrary probability distribution; the special feature of that problem is that both distributions are known to the player, although the player does not know which distribution is associated with each arm. This reference showed that the optimal player in that setting is myopic.  Reference \cite{rodman} further showed that the myopic player is optimal in the Bayesian $k$-armed bandit setting where the player knows that one arm has distribution $P$ (but does not know which arm) and all the other arms have the same distribution $Q$ (different from $P$).\footnote {See also reference \cite{zaborskis} that showed the same result  restricted to Bernoulli distributions.} One important application of the problem described in the previous sentence is that it leads to lower bounds for the general $k$-armed bandit, where the player has no special information about the arms.\footnote{See, e.g., Theorem 3.5 in reference \cite {bubeck_book}.} 

The  minimax optimal regret and pseudoregret  we determine in the symmetric two-armed bandit setting  lead to new regret and pseudoregret lower bounds in the general two-armed bandit setting. Existing nonasymptotic  lower bounds rely on information theory, in particular Pinsker's inequality, to bound below the  (pseudo)regret in certain symmetric bandit problems, which lead to lower bounds in the general bandit problems.  (We further discuss these lower bounds later in this section.)  Our results lead to new nonasymptotic lower bounds established without appealing to information theory in the two-armed setting.  We  hope that our methods will make progress towards better lower bounds in general $k$-armed bandit problems. 

Let $a(j)=(a_1, a_2)$ refer to a pair of distributions (\emph{arms}) where arm $j$ (the \emph{safe} arm) is assigned $0$ with probability $\frac{1-\epsilon}{2}$ and $1$ with probability $\frac{1+\epsilon}{2}$ independently from the  other arm and the history, and the other arm $i$ (the \emph{risky} arm) is assigned $0$ with probability $\frac{1+\epsilon}{2}$ and $1$ with probability $ \frac{1-\epsilon}{2}$ also independently. This work studies the following problem.
\begin{tcolorbox}
 \emph{The symmetric two-armed bandit:} In each period $t$ starting from $-T$ until $-1$:
 \begin {enumerate}
 \item  The player determines how to sample the arms by selecting a discrete probability distribution $p_t$ over the two arms.
\item The rewards $g_t: = (g_{1,t}, g_{2,t})$ are sampled  from $a(j)$, as defined above, and the player's choice of the arm $I_t \in [2]$ is sampled  from $p_t$ independently of $g_t$. 
 \item  This choice $I_t$ and the reward of the chosen arm $g_{I_t,t} $  are revealed to the player. 
 \end{enumerate} 
 \end{tcolorbox}
\emph{We denote the time $t$ by nonpositive integers such that the starting time is $-T \leq -1$ and the final time is zero.} This convention is convenient because it will lead to the relevant value functions of the game being dependent on $t$ instead of $T-t$ had we set the starting time to 0 and the final time to $T$.

Although the identities of the safe and risky arms are never revealed to the player,  the player knows that the distribution of the arms is symmetric.\footnote{As the analysis below shows, an optimal player is the same for all feasible values of the gap $\epsilon$. Therefore, the player would not get any additional advantage if the numerical value of the gap were revealed to her.} We also denote the \emph {accumulated and instantaneous regret} by 
\begin{align*}
x_t := \sum_{ \tau<t} r_\tau ~\text{and} ~r_{\tau}  := g_{\tau}-g_{I_{\tau}, \tau} \mathbbm 1,
\end{align*}
 respectively. (These include rewards that have not been revealed to the player.) The associated final-time expected regret, or simply the \emph{regret},  is given by the iterated expectation
\begin{align}
&R_T(p,a(j)) :=  \mathbb E_{\subalign {&I_{-T} \sim p_{-T}\\ &g_{-T} \sim a(j)}} \Big [ \mathbb E_{\subalign {&I_{-T+1} \sim p_{-T+1}\\ &g_{-T+1} \sim a(j)}} \Big [ \dotso  \Big [ \mathbb E_{\subalign {&I_{-1} \sim p_{-1}\\ &g_{-1} \sim a(j)}}  \max_{ i \in [2] }  \sum_{ t = -T}^{-1}  (g_{i,t}-g_{I_{t}, t} \mathbbm 1) \Big ]  \dotso \Big ]  \Big ], \label{eq:regret}
\end{align}
which we denote succinctly as
\[
 \mathbb E_{ p,a(j)}   \max_{ i \in [2] } x_{i,0}.
 \]
The player strategy  $p = (p_{-T}, ..., p_{-1})$ is specified for every prediction period where each $p_t = (p_{1,t}, p_{2,t})$ is a discrete probability distribution over two arms.  This distribution can in principle be a function of all  information available to the player at time $t>-T$ (the history), i.e., $p_t \equiv p_t(H_{t-1})$ where
\begin{align}
H_{t-1}:=(I_{-T:t-1}, g_{I_{-T:t-1}}),\label{eq:history}
\end{align}
denotes the history,  $I_{-T:t-1} := I_{-T},..., I_{t-1}$ denotes the prior samples of the arms and $g_{I_{-T:t-1}} := g_{I_{-T},-T},..., g_{I_{t-1},t-1}$ denotes the previously revealed rewards.

 Note that the accumulated regret and instantaneous regret are vectors while the final-time expected regret is a scalar.  The player's objective is to minimize the  final-time expected regret for the choice of the safe and risky arms that maximizes this regret. Accordingly, a \emph {minimax optimal player} $p^*$ is a minimizer of the \emph{minimax regret}
\begin{align}
 \label{eq:minimax_regret}
 R^*_T:= \min_p \max_{j \in[2]}  R_T(p,a(j)) 
 \end{align}
 where the set of feasible $p$ is given in the previous paragraph. (We will refer to this player $p^*$ as simply an \emph{optimal player} when the context is clear.)
 
The \emph{suboptimality parameter or gap} of the arms is given by $\eps = \mu_j - \mu_i$ where $\mu_j$ and $\mu_i$ are the means of the safe and the risky arms, respectively.  We consider several scaling regimes where $\epsilon$ approaches zero as the number of prediction periods $T$ goes to infinity.
   
 Reference \cite{bather} considered the Bayesian version of our problem in the context of the following hypothesis test.  Let  the prior distribution be defined by assigning equal probabilities to 
\[
H_1: \mu_1 = \frac{1}{2} (1+ \eps), ~ \mu_2 = \frac{1}{2} (1- \eps), ~\text{and}~ H_2:  \mu_1 = \frac{1}{2} (1- \eps), ~
\mu_2 = \frac{1}{2} (1+ \eps)
\]
The expected number of times the inferior treatment (the risky arm $i$) is chosen is given by \textit{pseudoregret} $\bar R_T$ (also denoted as \textit{weak regret})
 \begin{align}
 \label{eq:pseudoregret1}
\bar R_T( p,a)= \eps \mathbbm E_{p,a(j)}  s_i 
 \end{align}
 where the expectation is computed similarly to \cref{eq:regret} and  $s_i$  denotes  the number of times the risky arm $i$ was sampled by the player.  Accordingly, a sampling rule that minimizes the expected number of times the inferior treatment is chosen leads to the \emph{Bayesian symmetric two-armed Bernoulli  bandit} problem: it has the same definition as the symmetric two-armed Bernoulli bandit  above, except that the index of the safe arm $j$ is sampled from a prior distribution over $\{1,2\}$  and the (Bayes) optimal player is a minimizer of the  \emph{Bayesian pseudoregret} (also called Bayes risk). In the case of the uniform prior, the Bayesian pseudoregret is given by
 \begin{align}
 \label{eq:baysian_pseudoregret}
\bar R_T^B = \min_p  \mathbb E_{j \sim \text{Unif}( \{1,2\})}  \bar R_T(p,a(j)) 
 \end{align}
where the set of feasible $p$ is the same as in the setting of the minimax regret above. 

For either choice of the safe arm, the distribution $a_{1}$ of arm 1 is the same as $1-a_{2}$, where $a_{2}$  is the distribution of the second arm. Thus, the player will get the same information about the means of both distributions by sampling either arm. Accordingly a success observed in any trial with arm 1 is equivalent to a failure observed from arm 2,  and the information  derived from any sequence of trials does not depend on the sampling rule.  

Let the revealed \emph{cumulative rewards} of arm $i$ be given by 
\[
G_i = \sum_{\tau <t}  g_{i,\tau}  \mathbbm 1_ {I_\tau =i},
\] 
 Reference \cite{bather} determined that the following player that selects the arm with the highest posterior probability of being the safe one given the revealed rewards (\emph{myopic player}) is Bayes optimal under the uniform prior. 
 \begin{tcolorbox} \emph{Myopic player $p^m$} for the two-armed Bernoulli bandit problem is
\begin{align*}
p^m = 
\begin{cases}
 (1,0) ~\text {if}~2G_1 -2G_2+s_2-s_1>0   \\
    \left(\frac{1}{2},\frac{1}{2}\right) ~\text {if}~2G_1 -2G_2+s_2-s_1=0\\
        (0,1) ~\text {if}~2G_1 -2G_2+s_2-s_1<0
        \end {cases} 
\end{align*}   
where $G_i$ and $s_i$ are defined above.
 \end{tcolorbox}
 Reference \cite{bather} also determined the leading order term of the above-mentioned Bayesian pseudoregret \eqref{eq:baysian_pseudoregret} to be $.265 \sqrt{T}$ (which corresponds to $.530 \sqrt{T}$ in the centered version of the problem we consider below).  Since an expectation is less or equal to the maximum,  $\bar R_T^B$ bounds below the \emph{minimax pseudoregret} given by
 \begin{align}
 \label{eq:minimax_pseudoregret}
\bar R_T^*  =\min_p \max_{j \in [2]} \bar R_T(p,a(j)).
 \end{align}
  Also, since \eqref{eq:pseudoregret1} can be equivalently expressed as
 \begin{align}
 \label{eq:pseudoregret}
\bar R_T(p,a) =   \max_{ i \in [2] } \mathbb E_{ p,a}  x_{i, 0},
 \end{align}
we have $\bar R_T(p,a) \leq  R_T(p,a)$ for any $p$ and $a$ as a result of exchanging the maximum with the expectation. Therefore the Bayesian pseudoregret  $\bar R_T^B$  also bounds the minimax regret $R_T^*$ below.  The Bayesian pseudoregret determined in  \cite{bather}  corresponds to the regime in which the gap between the means of the arms $\eps$ is a constant multiple of $T^{-\frac{1}{2}}$ (\emph{medium gap}) where $T$ is the number of prediction periods.

Although it is well-known that one can achieve $O(\sqrt{T})$-regret and pseudoregret in this (and more general) bandit settings, the exact constant inside the $O(\cdot)$ was not previously known in the minimax setting; also regret and pseudoregret  have not been previously determined across different scaling regimes of the gap. We obtain such results as well as eliminate several other conceptual barriers towards a more complete understanding of the regret under various scaling regimes of the gap by applying   PDE-based methods to the symmetric two-armed bandit model. Our principal conceptual advances are the following:
\begin {enumerate} 
\item We show that the  optimal player  in the symmetric two-armed bandit problem in the minimax setting is the same as in the Bayesian setting described above. We appear to be the first to give a proof of its optimality in the minimax setting, although its optimality in the Bayesian setting is known. This allows us to apply methods based on partial differential equations (PDE) to compute the regret and pseudoregret in the minimax setting. Thus, our methods make progress towards unifying the analysis of Bayesian and  minimax regret on the one hand, and unifying the analysis of regret and pseudoregret, on the other hand. 
\item   Since the optimal player is discontinuous as a function of revealed gains, the spatial derivatives of the solutions of the relevant PDEs are also discontinuous. While this  discontinuity does not affect the leading order term of the regret, it affects the discretization error. We are able to optimize this discontinuity to minimize this error. 
\item We determine the  minimax optimal regret and pseudoregret in the symmetric bandit setting, which  leads to new regret and pseudoregret lower bounds in the general two-armed bandit setting. While existing nonasymptotic  lower bounds rely on information theory,  as further discussed below, our results lead to new lower bounds established by more elementary techniques.  
\end{enumerate}
 These advances not only provide a fresh perspective on the symmetric two-armed bandit problem, but also allow us to improve on the existing bounds.
\begin{enumerate}
 \item  We show that the previously known leading order term of pseudoregret obtained in the Bayesian setting in \cite{bather}, corresponding to the medium gap regime, matches that in the minimax setting by associating the minimax pseudoregret with an explicit solution of a linear heat equation (\cref{sec:regret}).  In the hypothesis testing framework described above, our results extend to the minimax  setting the guarantee on the expected number of times the inferior treatment (risky arm) is chosen.  
\item  Although the optimal player is the same in the regret and pseudoregret settings, in the medium gap regime, the exact value of $\eps$ that inflicts the optimal regret is smaller than the one that inflicts the optimal pseudoregret, albeit still strictly larger than zero, which we believe has not been demonstrated previously.  Specifically, the largest regret of $.286\sqrt{T}$ (or $ .572\sqrt{T}$ in the equivalent centered problem described below) is achieved when the safe arm has mean $1/2 + .353/\sqrt {T}$ (or $.707/\sqrt{T}$ in the centered problem) (\cref{fig:c_gamma}).\footnote{These prefactors are rounded to 3 decimal places.}     In the hypothesis testing framework of \cite{bather}, the regret represents the expected difference between the outcomes of the better fixed treatment in hindsight and the outcomes of the sequence of treatments chosen by the player. 
\item Our methods also obtain the leading terms of the regret and pseudoregret if  the gap approaches zero (a) faster than a constant multiple of $T^{-\frac{1}{2}}$ (\emph{small gap}) or (b) slower than a constant multiple of $T^{-\frac{1}{2}}$ 
(\emph{large gap}) (\cref{tab:summary}).
\item In the small gap regime, the regret does not depend on the gap and in particular, it is the same as in the regime where the gap is zero. On the other hand, the optimal pseudoregret is $\eps T/2$ (or $\eps T$ in the centered version of the problem), which would be the same if   the player naively sampled each arm an equal number of times. This establishes (again without appealing to information-theoretic tools) that the optimal player cannot detect the gap in this regime.
\item Our methods also provide new non-asymptotic guarantees in each of the three gap regimes (\cref{sec:regret}, \cref{sec:pseudo_regret} and \cref{tab:summary}).
\end {enumerate}

 PDE-based methods have been previously applied to other bandit problems. For example, references \cite{Chernoff68, Chang87, Lai88} used free-boundary problems involving the heat equation to study bandit problems in the \emph{fixed} gap regime.  These bounds typically scale as $O(\frac{1}{\eps} \log T)$ and therefore do not guarantee $O(\sqrt {T})$ regret whenever the gap $\epsilon$ approaches zero faster than a constant multiple of $T^{-\frac{1}{2}}  \log T$.\footnote{See also reference \cite{Lai05} for a survey of these and related results.}   Reference \cite{KW23} considered  the diffusion limit of the Thompson sampling strategy in the general bandit setting, and among other results, upper bounded the pseudoreget associated with this strategy in the  two-armed bandit setting in the large gap regime.\footnote {Since Thompson sampling is not necessarily an  optimal strategy in the present setting, in  \cref{sec: existing bounds} we confirm that the minimax regret we obtain for the symmetric two armed bandit in the large gap regime satisfies the upper bound in  \cite{KW23}, and therefore our results are consistent with that reference.} To our knowledge, the present paper is the first application of a PDE-based methods to guarantee $O(\sqrt {T})$ minimax regret and pseudoregret  in a bandit problem when the gap approaches zero at an \emph{arbitrary} rate, i.e., the difficulty of detecting the gap increases \emph{arbitrarily} as the sample size increases.

 Our methods involve identifying a PDE whose solutions approximate the final time regret (asymptotically, in certain regimes as the number of time steps tends to infinity and the parameter $\eps$ tends to zero). It is easy to explain, at a conceptual level, why a PDE-based method is useful. Indeed, our symmetric two-armed bandit problem has the feature that the optimal player strategy is known, and it depends on the history in a very simple way. Therefore (as we shall explain), the evolution in time of the (optimal) player's regret can be viewed as a random walk in a suitable state space. Since we are interested in the properties of this random walk over long times, one approach would be to consider a suitable scaling limit (in the same way that a simple random walk on a lattice can be studied by considering Brownian motion). For example, a Hamilton-Jacobi-Bellman PDE emerged in reference \cite{zhu22} from applying a scaling argument in the context of considering optimal player strategies for $k$-armed Bayesian bandits.\footnote{In that general setting, the optimal player is not  known explicitly, and while the  PDE-based model is supported by extensive numerical experiments, convergence of the value function of the discrete bandit problem to the PDE solution, as well as explicit regret bounds in different scaling regimes, have not yet been obtained analytically.  Our PDE-based methods are aimed to make progress towards achieving those results.}
 
In the present setting a more elementary alternative to the scaling argument is also available, namely: the backward Kolmogorov equation of the scaling limit is easy to guess; since  the expected value of the random walk is like a discrete-time numerical scheme for this PDE, the fact that the PDE solution and this value function are close can be shown using little more than Taylor expansion. Our analysis uses this more elementary approach. Its execution is complicated by the fact that the solution of our PDE is not smooth -- rather, it is piecewise smooth and at most $C^1$ in the spatial coordinates, with bounded second-order derivatives. But the execution is simplified by the fact that the solution can be found explicitly; therefore the error terms introduced by Taylor expansion have  explicit estimates.  
 
 The symmetric two-armed Bernoulli bandit we examine is a restriction to $k=2$ of the $k$-armed bandit distribution that provides essentially the only known lower bound for the general  \textit{$k$-armed  stochastic bandit} problem. In that setting there are  $k$ probability distributions (arms) $a=(a_1, \dotso, a_k)$, and the safe arm is chosen uniformly at random at the start of the prediction process. In each period $t$ , the player determines which of the $k$ arms to follow by selecting a discrete probability distribution $p_t \in \Delta_k$;  the arms' rewards $g_t$ and the player's choice of the arm $I_t \in [k]$ are sampled independently from $a$ and $p_t$, respectively; then this choice $I_t$ and the rewards of the chosen arm $g_{I_t,t} $  are revealed to the player.  Theorem 3.5 in \cite {bubeck_book} proved an $\Omega (\sqrt {kT})$ lower bound using the probabilistic method.  This proof is based on information theoretic tools, in particular Pinsker's inequality, and entails averaging  over random choices of the safe arm, which is distributed according to an i.i.d. Bernoulli distribution with mean $\frac{1}{2} +\epsilon$. The remaining risky arms  have the same mean $\frac{1}{2} -\epsilon$ for $\epsilon = \gamma \sqrt {k/T}$ where  $\gamma>0$ is fixed.\footnote{The earlier reference \cite{auer} originally proved a similar lower bound.}  In the foregoing reference, the authors noted that they are not aware of any other techniques to prove bandit lower bounds. The methods in our paper make progress towards developing new techniques to prove such bounds.\footnote{By references \cite{rodman, zaborskis} discussed earlier in this section, similarly to the optimal player in the  symmetric two-armed Bernoulli bandit, the optimal player is myopic when it faces the $k$-armed bandit distribution described in the paragraph accompanied by this footnote.}  


As noted previously  the pseudoregret represents  the expected number of times the inferior treatment (risky arm) is chosen while  the regret represents the expected difference between the outcomes of the better arm in hindsight and the outcomes of the sequence of treatments chosen by the player. Nevertheless, the only known lower bounds for regret in general bandit problems are given by the pseudoregret associated with the stochastic Bernoulli distributions described in the previous paragraph. Our methods make progress towards developing new PDE-based techniques to prove lower bounds with respect to regret directly.

Another classic online learning problem is prediction with expert advice. This setting is rather different from the  bandit problem: the rewards of \textit{all} ``arms" (referred to as \emph{experts} in this setting) are revealed to the player in this problem, i.e., it is a \emph{complete information} problem. References \cite{Zhu, rokhlin, drenska2019prediction} connected this problem with a PDE, by considering a scaling limit as the number of time steps tends to infinity.  A little later, \cite{kobzar, kobzar_geom} obtained closely related results by more elementary Taylor-expansion-based methods. PDE-based analysis of regret has been used to determine asymptotically optimal strategies and regret in prediction with expert advice explicitly in certain cases \cite{bayraktar2019b, bayraktar2019a}, to analyze  variations of this classic problem  \cite{bayraktar2020prediction, drenska2020c, drenska2020b, drenska2020a, harvey2020optimal}, and to study drifting games \cite{wang22} and unconstrained online linear optimization \cite{zhang22}.   In reference \cite{bayraktar2022}, PDE-based methods connected with the prediction with expert advice literature were used to guarantee $O(\sqrt {T})$  regret in a bandit-like game where  the adversary's distribution in each round is revealed to the player in addition to the sampled gains. Notwithstanding the fundamental differences between stochastic bandits and  complete information problems, like prediction with expert advice,  the estimation of the value of the discrete game by a PDE solution using backwards induction (the``verification argument") in this paper is similar to that in \cite{kobzar}. 

The paper is organized as follows:  \cref{sec:main} sets forth our main results, \cref{sec: existing bounds} describes their relationship to the existing bounds, and the conclusion follows in \cref{sec:conclusions}. 

\section{Notation}

If $u$ is a function of several variables, subscripts denote partial derivatives (so $u_x$ and $u_t$ are first derivatives, and $u_{xx}$, $u_{xt}$ and $u_{tt}$ are second derivatives). In other settings, the subscript $t$ is an index; in particular, the arms' rewards and the player's choice of the arm at time $t$ are $g_t$ and $I_t$, and $g_{i,t}$ refers to the $i$-th component of $g_t$.  When no confusion will result, we sometimes omit the index $t$, writing for example $x$ rather than $x_t$; in such a setting, $x_{i}$ refers to the $i$-th component of $x_t$. 

If $u$ is a function, $\Delta u := \sum_i \frac{\partial^2 u}{\partial x_i^2}$ is its Laplacian; however, the standalone symbol $\Delta_k$ refers to the set of probability distributions on $\{1, ..., k\}$. $[k]$ and $[-T]$ denote the sets $\{1, ..., k\}$ and  $\{-T, ..., -1\}$ respectively for natural numbers $k$ and $T$.  $\mathbbm 1$ is a vector in $\mathbb R^k$ with all components equal to 1,  but $\mathbbm 1_S$ refers to the indicator function of the set $S$.  
If $f$ and $g$ are functions, $f*g$ represents their convolution. 

\section{Main results}
\label{sec:main}
\subsection {Optimality of the myopic player} In this section, we show that a myopic player is minimax optimal for the symmetric two-armed Bernoulli bandit.  
\label{sec:optimality}

In order to reduce the number of state variables, \emph{we center and normalize the range of rewards}, such that each arm will have the reward $-1$ with the probability of reward $0$ in the original problem, i.e., the rewards in the new game are given by 
\begin{align} \label{eq:centering}
\hat g_\tau = 2 g_\tau-\mathbbm 1.
\end{align}  
As shown in \cref{app:optimal_player}, this centering eliminates the need to track $s_1$ and $s_2$, the number of times each arm was pulled. \emph{In the remainder of this paper, we will only use the centered rewards but we will omit the superscript $\hat~$ (hat).}  We may also omit the word centered when we refer  the symmetric two-armed Bernoulli bandit with the centered rewards. 

Let the \emph{difference between the cumulative revealed rewards} be 
\begin{align}
 \xi_t^r :=  \sum_{\tau<t}g_{1,\tau} \mathbbm 1_ {I_\tau =1}  -g_{2,\tau}  \mathbbm 1_{I_\tau =2}. \label{eq:xir_noncentered}
\end{align}
Then the \emph{myopic player} $p^m$ is given as follows.
\begin{tcolorbox} \emph{Myopic player $p^m$} for the \emph{centered} symmetric two-armed Bernoulli bandit is
\begin{align}
\label{eq:optimal_player}
p^m(\xi_t^r) = 
\begin{cases}
 (1,0) ~\text {if}~\xi^r_t >0   \\
    \left(\frac{1}{2},\frac{1}{2}\right) ~\text {if}~\xi_t^r=0\\
        (0,1) ~\text {if}~\xi_t^r<0
        \end {cases} 
\end{align}   
\end{tcolorbox} 
This player $p^m$ chooses the safe arm such that the revealed rewards are most probable, i.e., it  is the maximum likelihood estimator of the safe arm (as we explain in the opening paragraphs of \cref{app:optimal_player}). We  show in the same appendix that this strategy is also \emph{minimax} optimal with respect to both regret and pseudoregret for the symmetric two-armed Bernoulli bandit.  
\begin{lemma} 
\label {lemma:optimal_player}
The player $p^m$ given by \cref{eq:optimal_player} is a minimizer of  \cref{eq:minimax_regret} and  \cref{eq:minimax_pseudoregret}. Moreover, this strategy makes the player indifferent about which arm is risky, that is, $R_T(p^m,a(1)) = R_T(p^m,a(2))$, and $\bar R_T(p^m,a(1)) = \bar R_T(p^m,a(2))$. 
\end{lemma}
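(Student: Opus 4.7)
The plan is to establish minimax optimality via a Bayes--minimax duality: I will show that (a) $p^m$ is Bayes optimal under the uniform prior on $j$ for both regret and pseudoregret, and (b) $p^m$ equalizes the two arms, $R_T(p^m,a(1))=R_T(p^m,a(2))$ and likewise for $\bar R_T$. Together these yield minimax optimality via
\[
\max_j R_T(p,a(j)) \;\geq\; \tfrac{1}{2}\sum_j R_T(p,a(j)) \;\geq\; \tfrac{1}{2}\sum_j R_T(p^m,a(j)) \;=\; R_T(p^m,a(1)) \;=\; \max_j R_T(p^m,a(j)),
\]
and the same chain for $\bar R_T$, with the middle inequality from (a) and the first equality from (b).

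The structural observation driving (a) is that the player's information is unaffected by her strategy. Set $Z_\tau := g_{1,\tau}\mathbbm{1}_{I_\tau=1} - g_{2,\tau}\mathbbm{1}_{I_\tau=2}$, so $\xi_t^r=\sum_{\tau<t}Z_\tau$. Because $a_1=-a_2$ after centering, the conditional law of $Z_\tau$ given $H_{\tau-1}$ and $I_\tau$ under $a(j)$ is $\{\pm 1\}$-valued with mean $+\epsilon$ if $j=1$ and $-\epsilon$ if $j=2$, \emph{independently of $I_\tau$ and hence of $p$}. Thus $(Z_\tau)$ is i.i.d.\ and the likelihood ratio $dP_{a(1)}/dP_{a(2)}(H_{t-1}) = ((1+\epsilon)/(1-\epsilon))^{\xi_t^r}$ is monotone in $\xi_t^r$; the mode of the posterior on $j$ under the uniform prior is therefore exactly $p^m(\xi_t^r)$. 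The indifference claim (b) follows from the bijection swapping arm labels $1\leftrightarrow 2$, which carries $a(1)\leftrightarrow a(2)$ and $\xi^r\mapsto -\xi^r$ while leaving $p^m$ invariant by its symmetric definition in \eqref{eq:optimal_player}; hence the joint law of $(x_{1,0},x_{2,0})$ under $(p^m,a(1))$ equals the coordinate-swap of that under $(p^m,a(2))$.

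For (a) I will use the decomposition $\max_i x_{i,0} = \tfrac{1}{2}(\eta_0 + |\xi_0^r + \xi_0^h|)$ together with the crucial fact that $\xi_0^r + \xi_0^h = \sum_t(g_{1,t}-g_{2,t})$ depends only on the arm realizations, so its conditional distribution given $j$ is independent of $p$. This reduces minimization of $R_T(p,a(j))$ over $p$ for fixed $j$ to minimization of $\mathbb E_{a(j)}[\eta_0] = 2\epsilon(2\mathbb E[s_{\text{risky}}]-T)$, i.e.\ of $\mathbb E[s_{\text{risky}}]$; the same reduction applies to $\bar R_T$ via \eqref{eq:pseudoregret}. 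Averaging over the uniform prior, the conditional probability of a wrong-arm pull at round $t$ given $H_{t-1}$ is minimized pointwise by selecting the posterior mode of $j$, which is $p^m$. The main obstacle is verifying that this myopic choice is \emph{globally} optimal rather than merely greedy for the current round; this closes by backward induction because $(Z_\tau)_{\tau>t}$ is i.i.d.\ and independent of $I_t$, so the player's current action only affects the instantaneous contribution to $\eta$ and cannot degrade her future information state.
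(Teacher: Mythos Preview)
Your proposal is correct and takes a genuinely different route from the paper's proof. Both proofs begin with the same reduction: since $\xi_0^r+\xi_0^h=\sum_t(g_{1,t}-g_{2,t})$ is strategy-independent, minimizing regret (or pseudoregret) reduces to minimizing $\mathbb{E}[\eta_0]$, equivalently $\mathbb{E}[s_{\text{risky}}]$. From there the arguments diverge. You invoke the classical Bayes--equalizer route: the strategy-independence of the law of $(Z_\tau)$ makes $\xi_t^r$ a sufficient statistic whose marginal law does not depend on $p$, so the per-round error probability decouples and is minimized pointwise by the posterior mode, giving Bayes optimality of $p^m$; combined with the label-swap symmetry (which gives $R_T(p^m,a(1))=R_T(p^m,a(2))$ directly), minimax optimality follows from the standard inequality chain you wrote. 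The paper instead attacks the minimax problem head-on: starting from any optimal $p^*$, it replaces $p^*$ by $p^m$ at the first time $\tau$ where they differ, leaving later rounds statistically unchanged via a coupling that substitutes $-g_{I_\tau,\tau}$ for the unobserved $g_{\tilde I_\tau,\tau}$; the $\min_j$ at time $\tau$ is then handled by pairing each history with its ``complement'' (same revealed gains, opposite arm labels) and applying an elementary min--max lemma over such pairs, which simultaneously delivers the indifference property. Your approach is shorter and more conceptual---it exploits the decoupling cleanly and recovers Bather's Bayes result as a byproduct---while the paper's approach is self-contained in the minimax framework and makes the indifference a structural consequence of the pairing rather than an appeal to symmetry. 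One small remark: the ``backward induction'' you mention is not really needed; once you observe that the marginal law of $\xi_t^r$ is strategy-independent, the sum $\sum_t\mathbb{P}(I_t\neq j)$ decouples term-by-term with no dynamic programming required.
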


\subsection{Centered state variables}
\label{sec:regret} In this section, we define the state variables used in the remainder of this work.  By \cref {lemma:optimal_player}, the minimax regret \cref{eq:minimax_regret} is $R^*_T= R_T (p^m, a(1)) =  R_T (p^m, a(2))$, i.e., the ``adversary''  achieves the maximum regret by making either arm safe.\footnote{Note that the player of course does not need to know which arm is safe in order to implement $p^m$.} \emph{Therefore, we will assume that the safe and risky arms are secretly labeled as arms 1 and 2, respectively, and we will omit the parameter $j$:} the distribution of the symmetric two-armed Bernoulli bandit will be denoted $a =(a_1, a_2)$  where $a_1$ is the distribution of the safe arm and $a_2$ is the distribution of the risky arm.  

We now define the  \emph{centered} difference between the cumulative revealed rewards as 
\begin{align}
 \hat {\xi}_t^r :=  \xi_t^r - \eps t. 
\label{eq:xir_centered}
\end{align}
As we will see below, this centering ensures that the increments of  $\hat {\xi}_t^r$ have  mean zero as this state variable evolves in accordance with the rule of our bandit problem. Using the centered variables will simplify the calculations in the remainder of the paper.  \emph{Accordingly, going forward we will only use the centered variable and omit the superscript $\hat~$ (hat).}  

After centering $\xi_t^r$,  the \emph{myopic player} $p^m$ is given as follows.
\begin{tcolorbox} \emph{Myopic player $p^m$} using the \emph{centered} $\xi^r_t$  is
\begin{align}
\label{eq:optimal_player_centered}
p^m(\xi_t^r,t) = 
\begin{cases}
 (1,0) ~\text {if}~\xi^r_t + \eps t >0   \\
    \left(\frac{1}{2},\frac{1}{2}\right) ~\text {if}~\xi_t^r+\eps t=0\\
        (0,1) ~\text {if}~\xi_t^r+\eps t <0
        \end {cases} 
\end{align}   
\end{tcolorbox} 
Let us also denote the \emph{centered difference between the cumulative hidden rewards} by
\[
\xi_t^h := \sum_{\tau<t}   g_{1,\tau}  \mathbbm 1_ {I_\tau =2}  - g_{2,\tau} \mathbbm 1_{I_\tau =1}- \eps t,
\]  
and define
\[
\xi_t := \left( \xi_t^h, \xi_t^r \right).
\]
Finally, let us consider the difference between the reward of the arm $J_\tau$ not chosen by the player and the arm $I_\tau$ chosen, that is 
\[
g_{J_{\tau},\tau} - g_{I_{\tau},\tau} = g_{1,\tau}   + g_{2,\tau}- 2 g_{I_{\tau},\tau}.
\]
We denote by $\eta$, the cumulative sum of these differences at time $t$:  
\[
\eta_t := \sum_{\tau <t}  g_{1,\tau}   + g_{2,\tau}- 2 g_{I_{\tau},\tau}.
\]
We will omit the subscript $t$ from the state variables defined above for simplicity whenever this information is clear from context.  

A brief calculation reveals that
\[
\max_{i} x_{i,0} = \frac{1}{2}\left(x_{1,0}+x_{2,0}+ |x_{1,0} - x_{2,0} | \right)= \frac{1}{2}\left (\eta_0 +|\xi_0^r + \xi_0^h | \right).
\] It is therefore natural to define
\[
\mu(\eta, \xi):= \frac{1}{2}\left (\eta +|\xi^r + \xi^h | \right). 
\]

\subsection{Asymptotically optimal regret using a $C^1$ PDE solution} 
\label{sec:opt_regret}

 Let  $v(\eta,\xi, t)$  represent the final-time regret if the bandit game starts at time $t$ with specified values of $\eta$ and $\xi$, and the player uses the $p^m$ strategy. Accordingly, for a symmetric two-armed Bernoulli bandit $a =(a_1, a_2)$ and an optimal player $p^m\equiv p^m(\xi^r_{t}, t)$, 
 \begin{align}
 &v(\eta_t, \xi_t, t)=   \mathbb E_{\subalign {&I_{t} \sim p^m\\ &g_{t} \sim a}} \Big [ \mathbb E_{\subalign {&I_{t+1} \sim p^m\\ &g_{t+1} \sim a}} \Big [ \dotso \Big [\mathbb E_{\subalign {&I_{-1} \sim p^m\\ &g_{-1} \sim a}} \mu ( \eta_t + \sum_{ \tau = t}^{-1} d \eta_\tau, \xi_t +  \sum_{ \tau = t}^{-1}  d\xi_\tau ) \Big ] \dotso \Big ] \Big ]\label{eq:v_minmax}
\end{align}
where in accordance with the information flow of bandit problem, at time $t$, $p^m$ is evaluated at $\xi^r_{t}$; at time $t+1$, $p^m$ is evaluated at $\xi^r_{t+1}$ etc.. The increments of the state variables are $d \eta_\tau =g_{1, \tau} +g_{2, \tau}-2g_{I_\tau}$  and 
\[
d\xi_\tau = (g_{1, \tau}  \mathbbm 1_ {I_\tau =2}  - g_{2, \tau} \mathbbm 1_{I_\tau =1}-\eps, ~g_{1, \tau} \mathbbm 1_ {I_\tau =1}  -g_{2, \tau}  \mathbbm 1_{I_\tau =2} -\eps).
\] Thus, the minimax optimal regret is
\begin{align}
 R_T^* = R_T(p^m,a) =  v(0, \eps T \mathbbm 1, -T) \label{eq:regret_value}.
\end{align}

According to the rules of the Bernoulli bandit problem, the domain of $v$ is restricted to the values of $\eta$, $\xi^h$, $\xi^r$, such that $\eta$, $\xi^h+\eps t$, $\xi^r +\eps t$ are integers, and $t \in [-T]$. This function $v$ is characterized iteratively:
\begin{subequations}{\label{eq:w_dp}}
\begin{align} 
v(\eta, \xi, 0) &= \mu(\eta, \xi) \label{eq:w_dp_a}\\
v(\eta, \xi, t) &=   \mathbb E_{a, p^m} ~ v(\eta+d \eta, \xi+d\xi, t+1)  ~\text{for} ~t \leq -1 \label{eq:w_dp_b}.
\end{align}
\end{subequations}
The foregoing characterization of $v$ resembles a numerical scheme for solving a PDE.\footnote {Our use of an iterative scheme is similar to that in \cite{kobzar}.} The essence of our analysis is that we identify the PDE and use it to estimate the regret.  We shall show that the leading order behavior of $v$ is given by a family of solutions $u$ of the following linear heat equation with a discontinuous source term: 
 \begin{subequations}{\label{eq:value_pde}}
 \begin{align} 
 &u_{t} +Lu = q \label{eq:value_pde1}\\
 &u(\eta, \xi,0) =\mu(\eta, \xi)
 \end{align} 
  \end{subequations}
where the spatial operator is just a Laplacian in $\xi$
\begin{align*}
Lu :=  \frac{\kappa}{2}\Delta_{\xi}  u  ~\text{and}~ \kappa = 1-  \epsilon^2
\end{align*}
 and the source term is
\begin{align*}
 q(\xi^r,t)  = 
 \begin{cases}
  -  \epsilon   &\text{if}~\xi^r +\eps t<0 \\    
\epsilon &\text{if}~\xi^r +\eps t > 0
\end{cases}.
\end{align*}

The form of the PDE \eqref{eq:value_pde} comes, roughly speaking, from the condition that  the definition \eqref{eq:w_dp} of $v$ should be a consistent numerical scheme for the PDE. The argument that this leads to \eqref{eq:value_pde} is the essence of what we do in \cref{app:lb} (though we work harder in the Appendix than would have been needed to find the PDE, since the Appendix also provides error estimates). 

The function $u$ can be determined explicitly. Let $y =\xi^r +\eps t$.  Then the function $\varphi$ of $y$ that solves the  following ODE 
\begin{align}\label {eq:ode}
\epsilon \varphi'  +\frac{\sigma}{2}\varphi''  = q.
\end{align}
where $\sigma$ will be fixed later. We require $\varphi$ to be smooth except at $0$, continuous at $0$, and to have at most linear growth at infinity. These conditions determine it up two constants: an additive constant, and the discontinuity (if any) of $\varphi'$ at $0$. We eliminate the former by always taking $\varphi(0) = 0$, and we do not eliminate the latter since our best result will be obtained when $\varphi'$ has a small ($\epsilon$-dependent) discontinuity at $0$. Accordingly, 
\begin{align}
\varphi(y) = |y|+
\begin{cases}
0 & \text{if}~ y\leq 0 \\
 b e^{-2 \frac{\epsilon}{\sigma} y}-b  & \text{if}~ y >0  
 \end{cases} \label{eq:varphi},
 \end{align}
 where the constant $b$ parametrizes the discontinuity of $\varphi'$ at $0$.  In this paper, we will assume that $b$ is positive (it will be in fact either $\kappa/\eps$ or close to it since $\varphi$ needs to be either $C^1$ or nearly so.) 
 
If we take 
\[
\sigma = \kappa,
\]
 then for $w(\eta, \xi, t) = u(\eta, \xi, t) - \varphi(\xi^r +\eps t)$,  
\begin{subequations}{\label{eq:homogen_pde}}
\begin{align}
&w_t+Lw =0 \label{eq:homogen_pde1}\\
&w(\eta, \xi, 0)  = \psi (\eta, \xi) \label{eq:homogen_pde2}
\end{align}
\end{subequations}
where $\psi (\eta, \xi) = \mu(\eta, \xi) - \varphi (\xi^r)$.  Therefore, the solution $u$ of  \cref{eq:value_pde} can be represented as 
\[
u(\eta, \xi, t) =   w(\eta, \xi, t) +\varphi(\xi^r +\eps t)= (\Phi*\psi) (\eta, \xi, t) +\varphi(\xi^r +\eps t) =u^h(\eta, \xi, t) + u^n(\xi^r, t)
\]
where 
\[
u^h(\eta, \xi, t)= (\Phi* \mu) (\eta, \xi, t), 
\]
which we will refer to as the \emph{homogeneous solution},  
\[
u^n(\xi^r, t) = \varphi(\xi^r +\eps t) -\hat \varphi(\xi^r, t),
\]
which we will  refer to as the \emph{non-homogeneous solution}, and 
\[
\hat \varphi (\xi, t)=  (\Phi* \varphi) (\xi, t).
\]
where the convolutions are in the $\xi$ variables only and $\Phi$ is the fundamental solution of \cref{eq:homogen_pde1}.  In \cref{app:pde_soln}, we show that after a suitable change of variables the above convolutions are one dimensional, and $\Phi$ reduces to the fundamental solution of the 1D heat equation in \cref{eq:phi}.
\begin{lemma}
\label{lemma:pde_soln}
A family of continuous solutions of \eqref{eq:value_pde} on $\R^3 \times (-\infty,0)$ with at most linear growth at infinity are given by 
\[
u(\eta, \xi, t) = u^h(\eta, \xi^r + \xi^h, t)+ u^n( \xi^r, t),
\] 
where 
\begin{align}
&u^h(\eta, z, t) = \frac{1}{2} \Big( \eta +\int_{\R} \Phi(z - s,2t)  | s | ds \Big), ~\Phi(s, t) = \frac{1}{\sqrt {-2 \pi \kappa t}}\exp \left(\frac{s^2}{2\kappa t}\right) ;  \label{eq:phi}\\
&u^n( \xi^r, t) = \varphi(\xi^r + \eps t) - \hat \varphi(\xi^r,t),~     \hat \varphi(\xi^r,t) =\int_{\R}  \Phi(\xi^r-s, t) \varphi (s) ds  ~\text{and}~ \sigma = \kappa, \label{eq:hatphi}
\end{align}
and the scalar $b$ in \eqref{eq:varphi} parametrizes this family. 
\end {lemma}
Note that the discontinuity of $\varphi'$ and therefore $u_{\xi^r}$ at $\xi^r + \eps t= 0$ is
\[
u^+_{\xi^r} - u^-_{\xi^r} =  \varphi'^{+} - \varphi'^{-}= 2(1 -  \epsilon b/\sigma)
\] where the superscripts$~^+$ and$~^-$ denote the right and left derivatives, respectively, at that point. Also  the discontinuity of $u_t$ at $\xi^r + \eps t= 0$ is
\[
u^+_t - u^-_t =  \eps(\varphi'^{+} - \varphi'^{-})= 2\eps(1 -  \epsilon b/\sigma).
\]  
Therefore, if $\sigma = \kappa$ and $b ={\sigma}/ \epsilon={\kappa}/ \epsilon$, then $u$ is the unique $C^1$ solution of \eqref{eq:value_pde}. For all $b$,  the discontinuity of $\varphi''$ and therefore $u_{\xi^r\xi^r}$ at $\xi^r + \eps t= 0$ is
\[
u^+_{\xi^r\xi^r} - u^-_{\xi^r\xi^r} =\varphi''^{+} - \varphi''^{-}= 4 (\epsilon /\sigma)^2 b,
\] and $u$ is $C^\infty$ for all $\xi^r + \eps t \neq 0$ and $t<0$.

In \cref{app:lb}, we prove, using induction backward in time, that the function $u$ approximates $v$ associated with the bandit problem up to a higher order ``error" term $E_1(t)$, which can be estimated explicitly.\footnote{While we use the asymptotic notation for conciseness and clarity of exposition, this and other error terms in this paper can be estimated by our methods with explicit constant prefactors.} To obtain this estimate, we need certain bounds on derivatives of $u$.  The following bounds are proved in \cref{app:error}.
\begin{lemma} \label{lemma:error}
 We have $\varphi'     = O(1 + b  \epsilon/\sigma)$,  and for $d\geq 2$, $\varphi^{(d)}  = O (b (\epsilon/\sigma)^d)$. For integer $d \geq 1$, 
 \[
 \partial^d_z u^h = O \Big( |\kappa t  |^{\frac{1-d}{2}} \Big),~ u^h_{tt}=  O\Big (\sqrt {\kappa} |t|^{-\frac{3}{2}}  \Big).
 \] 
 and
\begin{align*}
\partial^d_{\xi^r} \hat \varphi     =O \left( (1 + b \epsilon / \sigma)  |\kappa t |^{\frac{1-d}{2}}\right),~ \hat \varphi_{tt}     =O \left( (1 + b  \epsilon/\sigma) \sqrt {\kappa} | t |^{-\frac{3}{2}}\right). 
\end{align*}
 However, if $b =\sigma/ \epsilon$, i.e., $\varphi$ is $C^1$,  for $d \geq 2$,
\begin{align*}
\partial^d_{\xi^r} \hat \varphi     =O \left( \min\Big( \frac{ \epsilon}{\sigma},  | t |^{-\frac{1}{2}} \Big)|\kappa t |^{1-\frac{d}{2}} \right)~ \text{and}~ \hat \varphi_{tt}     =O \left( \min\Big( \frac{\epsilon}{\sigma}, | t |^{-\frac{1}{2}} \Big)| \kappa t |^{-1} \right). 
\end{align*} 
In all cases above, the bounds hold uniformly in $\xi$ and $\eta$. At $y=0$,  the above mentioned  bounds on $\varphi^{(d)}(0)$ apply to the  right derivatives (the left second and higher order derivatives are zero).
\end{lemma}

Our proof that $u$ approximates $v$ must address the following technical issue: even if $b =\sigma/ \epsilon$, so that $u$ is $C^1$, the second derivative of $u$ with respect to $\xi^r$ is discontinuous at $\xi^r+\eps t=0$ (due to the discontinuity of the source term $q$).  Therefore, when we use a third order Taylor polynomial to estimate how $u$ changes when $\xi^r$ evolves, the conditions of the Taylor theorem are not satisfied on any interval containing the discontinuity.  However,  according to the rules of the Bernoulli bandit problem, the domain of $v$ is restricted to integer values of $\xi^r+\eps t$. Therefore, we only need to bound the evolution of $u$ over integer $\xi^r+\eps t$'s. Near a point where $\xi^r+\eps t=0$ we will estimate the evolution of $u$ by taking advantage of the explicit form of $\varphi$. 

When $u$ is $C^1$, the above-mentioned discontinuity of $u_{\xi^r \xi^r}$  is a jump of size $O(\epsilon)$, but averaging leads to an ``error term" $O(\epsilon^2)$ at each time step.   Accordingly, over the $T$ periods, these errors contribute an $ O( \epsilon^2 T)$ error to $E_1(-T)$. Therefore, $u$  represents the leading order term of the regret only if $u$ dominates the error, i.e., $\lim_{T \rightarrow \infty}  \epsilon^2 T/u(0,\eps T \mathbbm 1, -T) = 0$ where $\epsilon$ depends on $T$. We shall show that this occurs in several regimes: 
\begin{itemize}
\item  \emph {small gap}  when $\epsilon = o(T^{-1/2})$;
\item  \emph{medium gap}  when $\epsilon = \gamma T^{-1/2}$ for constant $\gamma >0$; and
\item  \emph{large gap}  when  $\epsilon \rightarrow 0$  slower than a constant multiple of $ T^{-1/2}$.  
\end{itemize}
These results follow from the following theorem, which is proved in \cref {app:lb}, combined with \cref{thm:lb2}, which improves upon  \cref{thm:lb} in the large gap regime. 
\begin{theorem} 
 \label{thm:lb}
 Let  the functions $u$ and $v$ be as defined above, where $u$ is $C^1$ of  \eqref{eq:value_pde}, i.e.,  $\sigma = \kappa$ and $b = \sigma/\eps=\kappa/\eps$.  Then,
  \[
  |u(0,\eps T \mathbbm 1, -T)-v(0,\eps T \mathbbm 1, -T)| \leq E_1(-T) 
  \]
 where the error term  $E_1(-T)$ is  $O \Big(1+ \sqrt {\kappa} +  \Big (\frac{\kappa}{\eps} \nu (\eps)+   \frac{\kappa}{\eps} \rho (\eps)+1\Big)T \Big)$ and
\begin{align*}
&\rho (\eps) = \Big(   \exp \Big(-\frac{2\eps}{\kappa} \Big)-1\Big) (1+\eps)/2 \\
&\nu(\eps) =  \Big(\exp\Big (-\frac{2\eps}{\kappa} \Big)(1+\eps)/2+\exp\Big (\frac{2\eps}{\kappa} \Big)(1-\eps)/2 -1\Big)
\end{align*}
  When $\eps \rightarrow 0$, the leading order terms  of $ \frac{\kappa}{\eps}\nu$ and $ \frac{\kappa}{\eps} \rho$ are 
 \begin{align*}
& \frac{\kappa}{\eps} \nu(\eps)  \approx     2\eps^3 ~\text{and}~  \frac{\kappa}{\eps} \rho (\eps) +1 \approx \eps^2
 \end{align*}
  Therefore, $E_1(T)$ is $O \Big(1 + \eps^2 T \Big)$.
   \end{theorem}
 By \cref{eq:regret_value}, we have determined the regret up to the discretization error:
 \[
u(0,\eps T \mathbbm 1, -T)-E_1(-T) \leq  R_T(p^m,a) \leq u(0,\eps T \mathbbm 1, -T)+E_1(-T). 
\]
To analyze the regret in different gap regimes,  we examine the rescaled value of $u$ at the start of the game using the following result established in \cref{app:c}.
 \begin{corollary} \label{lemma:c}
 For \begin{align}
\label {eq:gamma}
 \gamma = \epsilon \sqrt{T},
 \end{align}
 if the leading order term of $b$ is $\frac{1}{\eps}$ as $\eps \rightarrow 0$, we have
  \begin{align}
c(\gamma)&:= \lim_{\eps \rightarrow 0} \frac {1}{\sqrt {T} } u (0,\eps T \mathbbm 1,-T) \nonumber \\
&=  \frac{1} {\sqrt {\pi}} e^ {-\gamma^2 } + \gamma \erf \left ( \gamma  \right) +\left(\frac{1}{\gamma} - \gamma \right) ~\erf  \left (\frac{\gamma}{\sqrt {2}} \right) - \sqrt {\frac{2}{  \pi}} e^{- \frac{\gamma^2}{2} }.   \label {eq:c}
 \end{align}
\end{corollary}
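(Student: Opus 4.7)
The plan is to evaluate the explicit formulas for $u^h$ and $u^n$ from \cref{lemma:pde_soln} at $(\eta,\xi,t) = (0,0,-T)$ with $\epsilon = \gamma/\sqrt{T}$, recognize each as a Gaussian expectation, and then pass to the limit $T\to\infty$ using the fact that $\kappa = 2(1+\epsilon^2) \to 2$. Throughout I will rely on the classical identity for a Gaussian absolute moment: if $W \sim N(\mu_0,\sigma^2)$ then $\mathbb{E}|W| = \sigma\sqrt{2/\pi}\, e^{-\mu_0^2/(2\sigma^2)} + \mu_0\operatorname{erf}(\mu_0/(\sigma\sqrt{2}))$.

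First I would handle the homogeneous term. Substituting and making the change of variables $\sigma = z-s$, one sees that $u^h(0,0,-T) = \tfrac{\sqrt{\kappa}}{2}\,\mathbb{E}|z-Z|$, where $Z \sim N(0,T)$ and $z = 2\epsilon T/\sqrt{\kappa}$. With $\epsilon = \gamma/\sqrt{T}$, one computes $z^2/(2T) = 2\gamma^2/\kappa$ and $z/\sqrt{2T} = \sqrt{2}\gamma/\sqrt{\kappa}$, so the Gaussian moment formula gives
\[
\frac{u^h(0,0,-T)}{\sqrt{T}} = \sqrt{\tfrac{\kappa}{2\pi}}\, e^{-2\gamma^2/\kappa} + \gamma\, \operatorname{erf}\!\left(\tfrac{\sqrt{2}\gamma}{\sqrt{\kappa}}\right) \;\longrightarrow\; \tfrac{1}{\sqrt{\pi}}\,e^{-\gamma^2} + \gamma\,\operatorname{erf}(\gamma).
\]

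Next I would treat the non-homogeneous term $u^n(0,-T) = -\hat\varphi(0,-T)$. Since $\varphi(0) = 0$ and the convolution against $\Phi(\cdot - \epsilon t,t)$ at the origin equals $\mathbb{E}[\varphi(Y)]$ with $Y \sim N(\epsilon T, T) = N(\gamma\sqrt{T}, T)$, I would split $\mathbb{E}[\varphi(Y)]$ using the piecewise definition of $\varphi$. The $-s$ piece for $s\le 0$ and the $+s$ piece for $s>0$ combine to $\mathbb{E}|Y|$, which again comes from the Gaussian absolute moment formula. The remaining piece is $b\mathbb{E}[(e^{-2\epsilon Y}-1)\mathbbm{1}_{Y>0}]$. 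For $\int_0^\infty e^{-2\epsilon s} p_Y(s)\,ds$ I complete the square: the exponent becomes $-(s - (\mu - 2\epsilon T))^2/(2T)$ plus a constant, and the miraculous cancellation is that with $\mu = \epsilon T$ the constant vanishes and the shifted mean is $-\mu$. Hence this integral equals $P(Y'>0)$ with $Y' \sim N(-\mu, T)$, and the difference $b\left(P(Y'>0) - P(Y>0)\right) = -(1/\epsilon)\operatorname{erf}(\gamma/\sqrt{2}) = -(\sqrt{T}/\gamma)\operatorname{erf}(\gamma/\sqrt{2})$. Combining and dividing by $\sqrt{T}$,
\[
\frac{u^n(0,-T)}{\sqrt{T}} \;\longrightarrow\; -\sqrt{\tfrac{2}{\pi}}\,e^{-\gamma^2/2} + \left(\tfrac{1}{\gamma} - \gamma\right)\operatorname{erf}\!\left(\tfrac{\gamma}{\sqrt{2}}\right),
\]
and adding the two limits produces the stated formula for $c(\gamma)$.

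The only genuinely delicate point will be the completing-the-square step for the exponential integrand in $\hat\varphi$: one must choose $b = 1/\epsilon$ (consistent with the $C^1$ normalization) to cancel the large prefactor $1/\epsilon$ against the tail probability so that the difference $P(Y'>0)-P(Y>0)$ contributes a finite limit. Everything else is routine bookkeeping and continuity in $\epsilon$, which I would justify by noting that $\kappa \to 2$ and that all terms depend smoothly on $\epsilon = \gamma/\sqrt{T}$ near zero.
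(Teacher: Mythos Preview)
Your proposal is correct and follows essentially the same route as the paper: both compute $u^h(0,0,-T)$ and $u^n(0,-T)$ explicitly by evaluating Gaussian integrals and then let $\epsilon\to 0$ with $\gamma=\epsilon\sqrt{T}$ fixed. The paper writes out the antiderivatives of $\Phi(-s-\epsilon t,t)$, $s\,\Phi(-s-\epsilon t,t)$, and $e^{-2\epsilon s}\Phi(-s-\epsilon t,t)$ directly, while you package the same computations as the Gaussian absolute-moment identity and a completing-the-square argument; these are the same calculation in different notation, and your observation that the shifted mean becomes $-\mu$ with vanishing constant is exactly the paper's formula $\int\Phi(-s-\epsilon t,t)e^{-2\epsilon s}\,ds=\tfrac12\operatorname{erf}((s-\epsilon t)/\sqrt{-2t})$.
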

 In the small gap regime $\epsilon =o(T^{-1/2})$, and therefore $\gamma \rightarrow 0$ as $T \rightarrow  \infty$. Since $ \erf(x) \approx  \frac{2}{\sqrt {\pi}} (x - \frac{1}{3} x^3)$ near $0$, $\lim_{\gamma \rightarrow 0 } c(\gamma) = 1/\sqrt {\pi} $. This implies that the leading order regret $R_T(p^m,a)$ is $\sqrt { {T}/{\pi} } \approx .564 \sqrt {T}$, which matches the standard bound  obtained for this classic randomized adversary in the  setting of prediction with expert advice.\footnote{In this setting, the player strategy does not affect the leading order term of the regret. Therefore, the fact that the player does not have complete information in the bandit problem is irrelevant.  See Example 2 in \cite{kobzar} and note that the expectation of the maximum of two standard Gaussians is $ {1}/{\sqrt {\pi}}$.} 

In the medium gap regime,  $\epsilon = \gamma T^{-1/2}$ for constant $\gamma >0$.  Maximizing $ c(\gamma)$ numerically for $\gamma>0$ shows that for it has a unique maximizer $\gamma \approx .707$.  This yields the maximum leading order regret  $ \approx .572 \sqrt {T} $.  The function $c$ is plotted in \cref{fig:c_gamma}.

When  $\epsilon$ dominates  $T^{-1/2}$, i.e., $\gamma= \epsilon\sqrt {T} \rightarrow \infty$, which is denoted as $\epsilon= \omega \left(T^{-1/2}\right)$,  the above theorem allows to determine the leading order term of the regret  as long as  $\epsilon = o\left (T^{-1/3} \right)$. In this setting,  $\gamma \rightarrow \infty$ as  $T \rightarrow  \infty$.  Since $\erf(x) \approx  1 - e^{-x^2}/(x\sqrt {\pi})$ at infinity, $\lim_{\gamma \rightarrow \infty } \gamma c(\gamma) = 1$. Therefore, the leading order term of $u$ is $1/\epsilon$, which dominates $E_1(-T)$ given above as long as  $\epsilon = o\left (T^{-1/3} \right)$. 

If  $\epsilon$ approaches zero as a constant multiple of  $T^{-1/3}$ or slower, and $u$ is a $C^1$ function, \cref{thm:lb} does not recover the leading order term of the regret. In this regime, the leading order behavior of $u$ is still $1/\epsilon $, but it no longer dominates the $O(\epsilon^2 T) $ error. However, as shown in \cref{app:lb2} by selecting the suitable constant $b$  and making $\varphi'$ discontinuous at $\xi^r+\eps t=0$, we can offset  the  $O(\epsilon^2)$ error attributable to the discontinuity of $\varphi''$ at $\xi^r+\eps t=0$, and obtain the improved error term $E_0(-T)$. We will also set the prefactor $\sigma$ of the second order term in \eqref{eq:ode} to be different from the diffusion constant $\kappa$, which will reduce the error at $\xi^r+\eps t> 0$.

\subsection{Improved regret estimate in the large gap regime using a $C^0$ function} 
\label{sec:opt_regret}

In this section, we will use a modified version of the function $u$  reduce the discretization error in the large gap regime. Specifically,  by selecting the suitable constant $b$  and making $\varphi'$ discontinuous at $\xi^r+\eps t=0$, we can offset  the  $O(\epsilon^2)$ error attributable to the discontinuity of $\varphi''$ at $\xi^r+\eps t=0$.  Also by selecting a suitable prefactor $\sigma$ of the second order term in $\varphi$ we can \emph{eliminate} the discretization error attributable  to $\varphi$ for  $\xi^r+\eps t> 0$.  

Our  function $u$ used to estimate the regret will still be  represented as 
\[
u(\eta, \xi, t) =   u^h(\eta, \xi^h+\xi^r, t) + u^n(\xi^r, t)
\]
where the smooth PDE solutions are given by
\[
u^h(\eta, \xi^h+\xi^r, t)= (\Phi* \mu) (\eta, \xi^h+\xi^r, t), 
\]
and 
\[
\hat \varphi (\xi^r, t)=  (\Phi* \varphi) (\xi^r, t).
\]
where  $\Phi$ is still the fundamental solution of the heat equation given in \eqref{eq:phi} and $\varphi$ is given by \eqref{eq:varphi}. These properties will be sufficient to obtain the leading order regret estimates even though  $\varphi'$ has now a small ($\epsilon$-dependent) discontinuity at $0$ and, since $\sigma \neq \kappa$,
\[
u^n(\xi^r, t) = \varphi(\xi^r +\eps t) -\hat \varphi(\xi^r, t),
\]
and  $u$ are \emph{no longer}  solutions of linear heat equations. The bounds in \cref{lemma:error} will still apply, and the foregoing modifications will lead to the improved error term $E_0(-T)$, as shown in \cref{app:lb2}.
\begin{theorem} 
 \label{thm:lb2}
 Let  the functions $u$ and $v$ be as defined above, where  $u^h$ and $u^n$ are given by \cref{eq:phi} and \cref{eq:hatphi} respectively, with
 \begin{align}
\sigma = 2\eps \Big / \log  \Big(\frac{1+\eps}{1-\eps}\Big) \label{eq:sigma}
\end{align}
and $b =  \frac{1}{\eps}$.  Then  
  \[
  |u(0,\eps T \mathbbm 1, -T)-v(0,\eps T \mathbbm 1, -T)| \leq E_0(-T) 
  \]
 where the error term  $E_0(-T)$ is $O \left( 1+ (1+\frac{1}{\sigma}) \sqrt {\kappa}   \right)$.  As $\eps \rightarrow 0$, the leading order term of $E_0$ is $O \left( 1 \right)$.
 \end{theorem}
The preceding theorem improves upon \cref{thm:lb} and recovers the leading order term of the regret as long as $\epsilon$ approaches zero as \emph{any} rate.  The leading order term of the rescaled value of $u$ at the start of the game will be unchanged, as established in \cref{app:c}.
 \begin{corollary}
 \label{lemma:c_sigma}
 For $\gamma$ given by \eqref {eq:gamma}, if $\sigma$ is given by \eqref{eq:sigma}
and $b=\frac{1}{\eps}$, we have
  \begin{align}
c(\gamma)&:= \lim_{\eps \rightarrow 0} \frac {1}{\sqrt {T} } u (0,\eps T \mathbbm 1,-T) = \eqref {eq:c}
 \end{align}
\end{corollary}
The foregoing results are summarized in \cref{tab:summary}. If  $\epsilon$ is fixed as $T \rightarrow \infty$, our methods do not extract the leading order term of the regret: the $1/\epsilon $ leading order term of $ u$  will no longer dominate the $O(1)$ error. 

 \subsection{Asymptotically optimal pseudoregret}
 \label{sec:pseudo_regret}
 For a symmetric two-armed Bernoulli bandit $a$, the pseudoregret \cref{eq:pseudoregret} simplifies to
\[
\bar R( p,a)= \mathbbm E_{p,a} 2\eps s_2 
\]
where  $2\eps$ is the gap between the arms and $s_2$ is the number of time arm 2 (the risky arm) is pulled. 
  
Let $\bar v(\xi^r, s_2, t)$ represent the final-time pseudoregret if the bandit game starts at time $t$ with specified $\xi^r$ and $s_2$, and the player uses the strategy $p^m$.  This function $\bar v$ can be expressed similarly to \cref {eq:v_minmax} and is also characterized iteratively: 
\begin{subequations}{\label{eq:bar_w_dp}}
\begin{align} 
\bar v(\xi^r, s_2, 0) &= 2\eps s_2  \label{eq:bar_w_dp_a}\\
\bar v(\xi^r, s_2, t) &=   \mathbb E_{a, p^m} ~ \bar v(\xi^r+d\xi^r, s_2+ds_2, t+1)  ~\text{for} ~t \leq -1 \label{eq:bar_w_dp_b}
\end{align}
\end{subequations}
where $d\xi^r =  g_{1} \mathbbm 1_ {I =1}  -g_{2}  \mathbbm 1_{I =2} - \eps$ and  $ds_2 =   \mathbbm 1_{I =2}$.  The foregoing also resembles a numerical scheme for solving a PDE, similar to the one we considered in the previous section.  Again, the domain of $\bar v$ is restricted to integer values of $\xi^r$, $s_2$ and $t \in[-T]$, and we have  
\[
\bar R_T( p^m,a) = \bar v(\eps T,0, -T).
\] 
We identify the relevant PDE and use it to estimate the regret.  Specifically, we will show that the leading order behavior of $\bar v$ is given by a family of solutions $\bar u$ of the following linear heat equation with a discontinuous source: 
 \begin{subequations}{\label{eq:bar_value_pde}}
 \begin{align} 
 u_t+\frac{\kappa}{2}  u_{\xi^r \xi^r } = \bar q \label{eq:bar_value_pde1}\\
 u(\xi^r,s_2, 0)= 2 \eps s_2
 \end{align} 
  \end{subequations}
where the  source term is
\begin{align*}
\bar q(\xi^r)  = \begin{cases}
  -  2\epsilon   &\text{if}~\xi^r +\eps t <0 \\     
0 &\text{if}~\xi^r+\eps t  > 0
\end{cases}.
\end{align*}

Again, the form of the PDE \eqref{eq:bar_value_pde} comes, roughly speaking, from the condition that the definition \eqref{eq:bar_w_dp} of $\bar v$ should be a consistent numerical scheme for the PDE, and the argument that this leads to \eqref{eq:bar_value_pde}  parallels what we do in \cref{app:lb} to determine the PDE \eqref{eq:value_pde} in the context of regret (since the error estimates are somewhat different in the context of pseudoregret, they are determined in \cref{thm:bar_lb} and \cref {thm:lb2_bar}.)
  
Since the final value does not depend on $\xi^r$, the homogeneous solution that satisfies \cref{eq:bar_value_pde} without the source term is just the final value.  We let $\bar \varphi$ be a function of $y=\xi^r +\eps t $ satisfying 
\begin{align}\label {eq:bar_ode}
 \epsilon  \varphi'  +\frac{ \sigma}{2} \varphi''  = \bar q.
\end{align}
The $C^0$ solution of this ODE with $\bar \varphi(0) =0$ and at most linear growth at infinity is given by  
\begin{align}\label {eq:bar_ode_soln}
\bar \varphi(y) = 
\begin{cases}
- 2y & \text{if}~ y\leq 0 \\
 b  e^{-2 \frac{\epsilon}{\sigma} y}-b  & \text{if}~ y >0.  
\end{cases}
\end{align}
and when $\sigma = \kappa$, we obtain the following result. 
\begin{lemma}
\label{lemma:pde_soln_pseudo}
A family of continuous solutions of \eqref{eq:bar_value_pde} on $ \R^2 \times [-T,0)$ with at most linear growth at infinity are given by 
\[
\bar u( \xi^r, s_2, t) = 2 \epsilon s_2+ \bar u^n( \xi^r,  t)
\] 
where 
\begin{align}
 &\bar u^n( \xi^r, t) = \bar \varphi(\xi^r+\eps t) - \hat {\bar \varphi}(\xi^r,t)~ \text{and} ~  \hat {\bar \varphi}(\xi^r,t) =\int_{\R}  \Phi(\xi^r-s, t) \bar \varphi (s) ds  \label{eq:bar_uh}
\end{align}
where $\bar \varphi$ is given by \cref{eq:bar_ode_soln}, $\sigma = \kappa$, and   $\Phi(s, t) $ is given by  \cref{eq:phi} and $b$ is a constant that parametrizes the family of these solutions.   
\end {lemma}
If $b = \sigma / \eps = \kappa /\eps$, then $\bar u$ is the unique $C^1$ solution.  For other choices of $b$,  $\bar u$ is only $C^0$ at $\xi^r+\eps t =0$. For all $b$,  $\bar \varphi''$ and therefore $\bar u_{\xi^r\xi^r}$ have a jump at $\xi^r+\eps t= 0$.  Note that 
\[
\bar \varphi(y) = \varphi(y) - y
\]
where $\varphi$ is given by \cref{eq:varphi}, and thus
\[
 \hat {\bar \varphi}(\xi^r,t) =\int_{\R}  \Phi(\xi^r-s, t) (\varphi(s) - s) ds = \hat \varphi(\xi^r,t) -\xi^r
 \]
where    $\hat \varphi$ are given by  \cref{eq:varphi} and \cref{eq:hatphi} respectively.  Therefore,
\[
\bar u^n( \xi^r, t) = u^n( \xi^r, t) - \eps t 
\]
where   $u^n$ is given by  \cref{eq:hatphi}. Therefore, for $d \geq 2$, the  bounds on $\varphi^{(d)}$ and $\partial^d_{\xi^r} \hat \varphi$ in  \cref{lemma:error} apply to $ \bar \varphi^{(d)}$ and $\partial^d_{\xi^r} \hat {\bar \varphi}$ uniformly in $\xi^r$ and $s_2$. 

Since $u^n$ and therefore  $\bar u^n$ are smooth as $t \rightarrow 0$, we don't need to consider the final period separately for purposes of computing the discretization error. When  $u^n$ is $C^1$, the error accumulating in each time period attributable to  $u^n$ in \cref {eq:K}  is \begin{align} 
\bar K(t)&= O \left( \min \Big( \epsilon , \kappa | t |^{-\frac{1}{2}}\Big ) | t |^{-1}+\frac{\kappa}{\eps} \nu (\eps)+   \frac{\kappa}{\eps} \rho + 1\right) \nonumber 
\end{align}
The first term in the preceding expression is estimated by \cref{eq:min_estimate}, which leads to the following theorem.
\begin{theorem} 
 \label{thm:bar_lb}
 Let the functions $\bar u$ and $\bar v$ be as defined above, where $\bar u$ is the $C^1$ solution, i.e., $\sigma = \kappa$ and $b =  \sigma/\eps$. Then
  \[
  |\bar u(\eps T,0, -T)-\bar v(\eps T,0 -T)| \leq \bar E_1(-T) 
  \]
 where the error term  $\bar E_1(-T)$ is  
 \begin{align*}
 O \Big(  \epsilon \min ( \log (\kappa^2/ \eps ^2) +   1, \log T) +  \big(\frac{\kappa}{\eps} \nu (\eps)+   \frac{\kappa}{\eps} \rho + 1\big)T\Big).   
 \end{align*}
 When $\eps \rightarrow 0$,  the leading order terms  of $ \frac{\kappa}{\eps}\nu$ and $ \frac{\kappa}{\eps} \rho$ are set forth in  \cref{thm:lb}, and $\bar E_1(T)$ is 
\[
O \Big(\epsilon \min ( \log (1/ \eps ^2) +   1, \log T) + \eps^2 T)  \Big).
\]
\end{theorem}
 Since $\bar R_T( p^m,a) =\bar v(\eps T,0, -T)$, we have determined the pseudoregret up to the discretization error 
 \[
\bar u(\eps T,0, -T)-\bar E_1(-T) \leq  \bar R_T( p^m,a) \leq \bar u(\eps T,0, -T)+ \bar E_1(-T). 
\]
To analyze the pseudoregret in different gap regimes,  we examine the prefactor of the leading order term of $\bar u$ -- it is determined by taking the terms attributable to $u^n$ in  \cref{app:c} (plus $\eps T /\sqrt {T}$ which corresponds to adding $\gamma$ in  \cref {eq:bar_c} below).  
\begin{corollary}
 For $\gamma$ given by \cref{eq:gamma}, 
 if $\sigma=\kappa$ or $\sigma$ is given by \eqref{eq:sigma}, and the leading order term of $b$ is $\frac{1}{\eps}$, as $\eps \rightarrow 0$ we have
  \begin{align}
 \label {eq:bar_c}
\bar c(\gamma):=\frac {1}{\sqrt {T} } \bar u (\eps T,0,-T) =  \left (\frac{1} {\gamma} -\gamma \right)\erf  \left (\frac{\gamma }{\sqrt {2}} \right) -  \sqrt { \frac{2}{ \pi}} e^ {-\frac{\gamma^2}{2}} +\gamma.  
 \end{align}
\end{corollary}
 \begin{figure}[tbhp]
 \centering
   \includegraphics[width=.6\textwidth]{./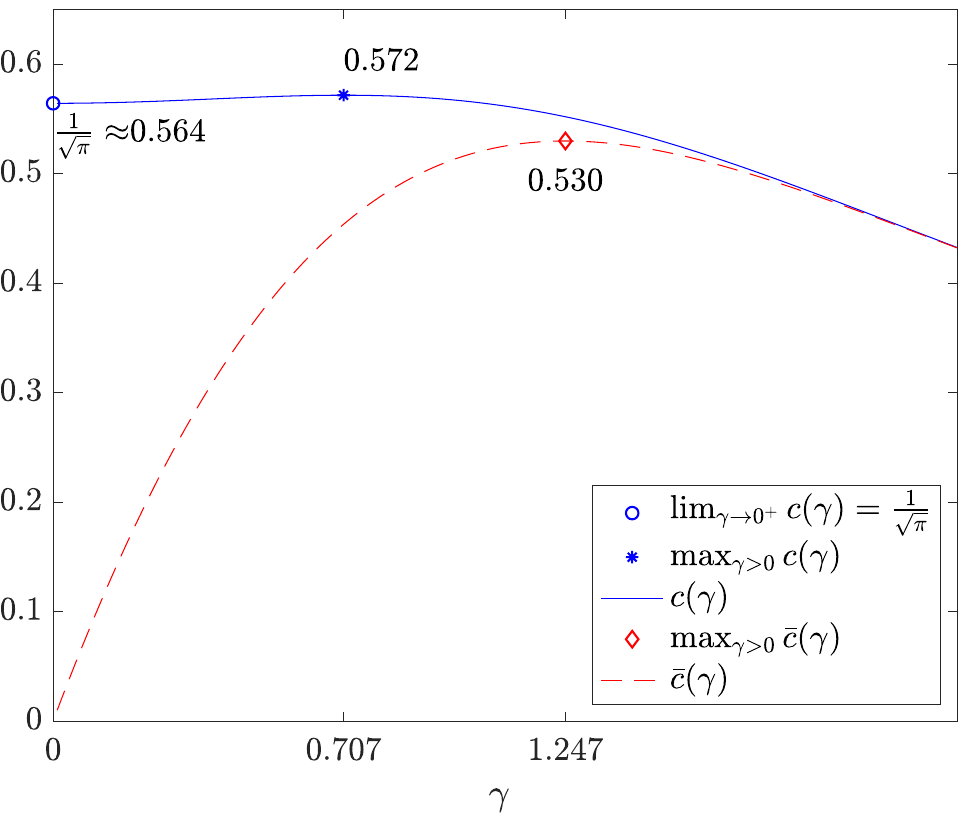} 
   \caption{Plots of the prefactors $c$ and $\bar c$ given by \cref{eq:c} and \cref{eq:bar_c}, respectively, of the leading order terms of optimal regret and pseudoregret as functions of $\gamma = \epsilon \sqrt{T}$ (medium gap regime).} 
   \label{fig:c_gamma}
\end{figure} 

 In the medium gap regime, this function $\bar c$ provides the constant prefactor of the leading order term of the regret, which is plotted in \cref{fig:c_gamma}. Maximizing  \cref{eq:bar_c} numerically for $\gamma>0$ shows that  it has a unique maximizer at $\gamma \approx 1.274$.  This yields the leading order regret  $ \approx .530\sqrt {T} $, which matches the result in \cite {bather}. \emph{This and other  references cited in this work use the 0/1 scaling of the rewards. Therefore, the constant prefactors of regret bounds in those references are smaller by a factor of $1/2$ than those in our paper.}

 In the small gap regime, since $\erf(x) \approx  \frac{2}{\sqrt {\pi}} (x - \frac{1}{3} x^3)$ near $0$, $\lim_{\gamma \rightarrow 0 }\bar c(\gamma)/\gamma =1  $. This yields  $\epsilon T$ as the leading order term of the pseudoregret.   

In the large gap regime, a computation similar to the corresponding computation in the previous section shows that the resulting leading order term of $\bar u$ is $1/\eps$. This term dominates $\bar E_1(-T)$ as long as $\eps \rightarrow 0$ as $T \rightarrow \infty$; so under this condition it reflects the leading order term of the pseudoregret. However, we can again reduce the first term of the error    $O(\epsilon^2T)$ to $0$ by making $\bar \varphi'$ discontinuous at $\xi^r+\eps t=0$. 

When $\sigma$ is given by \eqref{eq:sigma} and $b = 1/\eps$, the cumulative error  attributable to $u^n$ in \cref {app:lb2}  is 
\[
E_0(t) =O \left(  (1+\frac{1}{\sigma}) \sqrt {\kappa} \right).
\]
which leads to the following error estimate. 
\begin{theorem} 
 \label{thm:lb2_bar}
 Let  the functions $\bar u$ and $\bar v$ be as defined above, where $\bar u$ is a $C^0$ function with $\sigma$ given by \eqref{eq:sigma} and $b =  \frac{1}{\eps}$.  Then  
  \[
  |\bar u(\eps T,0, -T)-\bar v(\eps T,0,- T)| \leq \bar E_0(-T) 
  \]
 where the error term  $\bar E_0(-T)$ is $O \left( (1+\frac{1}{\sigma})  \sqrt {\kappa}   \right)$. When $\eps \rightarrow 0$, $\bar E_0(T)$ is $O \Big(1 \Big)$.
\end{theorem}
Using this improvement of  \cref{thm:bar_lb} in the large gap regime, we recover the leading order term of the pseudoregret as long as $\epsilon \rightarrow 0$ at any rate as $ T \rightarrow \infty$. The foregoing results are also summarized in \cref{tab:summary}. 

\begin{table}[tbhp]  
  \centering
  \begin{tabular}{|c|c|c|c|}
    \cline{2-4}
    \multicolumn{1}{c|}{} & \emph{Small gap}  &   \emph{Medium gap}  &  \emph{Large gap:} $\eps \in$  \\ %
     \multicolumn{1}{c|}{} & $\eps= o \big(T^{-\frac{1}{2}} \big)$ & $ \eps= \gamma T^{-\frac{1}{2}}$ & $ \big [\omega \big(T^{-\frac{1}{2}}\big),  o (1) \big]$   \\ \hline %
    $R_T(p^m, a)$  & $\frac{1}{\pi} T^{\frac{1}{2}} \approx .564 ~T^{\frac{1}{2}}$ & $c(\gamma)T^{\frac{1}{2}} (\text{max}~ .572 ~T^{\frac{1}{2}})$ & $ 1/\eps$ \\ %
    $\min(E_1(-T),E_0(-T))$  & $O(1)$ & $O(1)$ & $ O (1 )$ \\  \hline 
   $\bar R_T (p^m,a)$ & $\epsilon T$ & $\bar c(\gamma)T^{\frac{1}{2}} (\text{max}~ .530~ T^{\frac{1}{2}}) $ & $1/\eps$\\ 
    $\min (\bar E_1(-T),\bar E_0(-T))$  & $O(\epsilon \log T + \eps^2 T)  $ & $O 
    (1)$ & $O (1 )$ \\ \hline %
  \end{tabular}
  \caption{ The leading order terms of optimal regret and pseudoregret and discretization errors for the symmetric two-armed Bernoulli bandit. The maximum values of $c$ and $ \bar c$ for $\gamma>0$ in the medium gap regime are obtained by numerical optimization and rounded to 3 decimal places.} 
  \label{tab:summary}
\end{table}

If  $\epsilon$ is fixed as $T \rightarrow \infty$, our methods do not extract the leading order term of the pseudoregret: the $1/\epsilon $ leading order term of $\bar u$  will no longer dominate the $O(1)$ error.

\section {Relationship to existing results}
\label{sec: existing bounds}
As mentioned earlier, the symmetric two-armed Bernoulli bandit was previously considered in \cite{bather}. That paper determined an asymptotically optimal Baysian pseudoregret $0.530\sqrt{T}$, which matches our estimate. We are not aware of the leading order terms of the minimax optimal regret or pseudoregret having being determined previously  (as opposed to Bayesian pseudoregret) in the symmetric version of the problem. 

Since the regret and pseudoregret in the symmetric two-armed Bernoulli bandit bounds from below the minimax regret in the general two-armed stochastic and adversarial bandit problems, our results lead to an improved \emph{nonasymptotic} lower bounds for the latter classes of problems.\footnote{The existing \emph{asymptotic} lower bound for the general (non-symmetric) two-armed Bernouilli bandit is still sharper however than the leading order term lower bound that follows from our results.  In this setting, the minimax pseudoregret given by $M(T) = \min_p \max_{\mu_1, \mu_2} \bar R_T(p,a(\mu_1, \mu_2))$, where $\mu_1$ and $\mu_2$ are the means of the arms, is asymptotically bounded by
\[ 
0.612 \leq \lim_{T \rightarrow \infty} \inf  M(T) / T^{1/2} \leq \lim_{T \rightarrow \infty} \sup  M(T) / T^{1/2} \leq 0.752
 \]
where the lower and the upper bounds were determined in \cite{bather} and \cite{vogel1960} respectively.}  Previously, the best  nonasymptotic lower bound $\sqrt { 2T } /10  \approx .14 \sqrt{T}$ known to us for the general two-armed bandit problem is obtained for our symmetric Bernoulli distribution using information-theoretic tools.\footnote{See Theorem 3.5 in  \cite{bubeck_book}.   For further reference, in the general two-armed  bandit setting, the best nonasymptotic pseudoregret upper bound $2\sqrt { T \log {2}} \approx 1.665 \sqrt{T}$ is achieved by information-directed sampling (Specifically, Proposition 3 in \cite{russo14} established a $\sqrt{2 \log {\mathcal |A|} k T}$  pseudoregret bound for Bayesian bandits where  in the context of two-armed bandits the number of player's actions is $\mathcal |A|=2$.  Subsequently, Corollary 10 in  \cite{lattimore21} extended this bound to oblivious adversaries in the minimax setting.) The best nonasymptotic regret upper bound $(10.3 \sqrt {2 \log 2} + 2\sqrt {2/ \log 2})  \sqrt {T} \approx 15.525  \sqrt {T}$ known to us is achieved by an exponential weights-based algorithm (Theorem 3.4 in  \cite{bubeck_book}).}
 
  
As noted in \cref{sec:intro}, reference \cite{KW23} determined the upper bound on the pseudoregret of the diffusion limit of the Thompson sampling strategy in the general two-armed bandit setting in the large gap regime.  Specifically,  they showed that the rescaled pseudoregret ($\bar c = \bar R_T/\sqrt {T}$) guaranteed by Thompson sampling with respect to the rescaled gap $\gamma =  \eps \sqrt T $ is upper bounded as follows 
 \[
 \bar  c  \gamma ^\beta \rightarrow 0
 \] as the rescaled gap $\gamma =  \eps \sqrt T  \rightarrow \infty$ for any $\beta \in (0,1)$.  Our result that the leading order term of $\bar R_T(a,p)$ is $\frac{1}{\eps}$ implies that $\bar  c  \gamma \rightarrow 1$.  Therefore, for any  $\beta$, as above,  $\bar  c  \gamma ^\beta \rightarrow 0$. Therefore, the optimal regret in the two-armed symmetric bandit also satisfies the foregoing upper bound. This confirms that the optimal player performs in the two-armed symmetric setting no worse than (potentially suboptimal) Thompson sampling, and therefore our results are consistent with \cite{KW23}.

\section{Conclusion} 
\label{sec:conclusions}

In this work, we determine the minimax optimal player and characterize the asymptotically optimal minimax regret and pseudoregret of the symmetric two-armed Bernoulli bandit by explicit  solutions of linear heat equations when the gap between the means of the arms goes to zero as the number of prediction periods approaches infinity. We also provide new estimates of the non-asymptotic error.  Our PDE-based proof works despite the fact that the solution of our PDE has discontinuous derivatives and is not a classical one on the entire domain. Although  optimal player strategies are not known for more general bandit problems, we expect that the methods of this paper should be useful in considering how regret accumulates under specific player strategies, even when they are not known to be optimal. Separately, there are other bandit problems that do not require exploration, like  the symmetric two-armed bandit in the \emph{fixed} gap regime and the symmetric $k$-armed Bernoulli bandit distributions (which are, as discussed above, used to bound the regret from below in general $k$-armed bandit problems). We expect that the methods of our paper could be applied to such problems as well. 

\section*{Acknowledgements} 
V.A.K.  acknowledges helpful input from Chris Wiggins, and support from NSF grant DMS-1937254. R.V.K. acknowledges support from NSF grant  DMS-2009746.

\appendix

\section{Proof of \cref{lemma:optimal_player}} 
\label{app:optimal_player}

 A minimax optimal player $p^*$ for the regret minimization problem is, by definition, a
minimizer of \cref{eq:minimax_regret}, which can be expressed in the $\eta$ and $\xi$
coordinates as
\begin{align} \label{eq:minimax_regret_transformed}
\min_p \max_{j \in[2]} \mathbb E_{a(j),p} \Big [\frac{1}{2} (\eta_0 + |\xi_0^r + \xi_0^h|) \Big] .
\end{align}
Here, as discussed in \cref{sec:intro}, $p = (p_{-T}, \dotsc , p_{-1})$ ranges over all possible player
strategies; in particular, each $p_t$ depends only on the history that is available to the player at
time $t$. We shall show in this section that the strategy $p^m$ (defined by \cref{eq:optimal_player}) is
minimax optimal.

\emph{For purposes of this \cref{app:optimal_player}, we will use centered gains $g_i = \pm 1$ but consistently with \cref{sec:optimality} we will not center $\xi^r_t$ to have zero mean, i.e., we will use the definition of $\xi^r_t$ given by \cref{eq:xir_noncentered}.  (Elsewhere in the paper we will use centered $\xi^r_t$ given by \cref{eq:xir_centered}).} 

We start with an argument that makes this conclusion plausible (while also displaying
transparently some key ideas). Recall that in terms of the centered gains $g_i = \pm 1$,
$ p^m_t$ depends only on $\xi^r_t = G_{1,t} - G_{2,t}$, where at any time $t$ the observed gains are
$G_{i,t} = \sum_{\tau < t} g_{i,\tau} \1_{I_\tau = i}$. It chooses arm $1$ if $\xi^r_t > 0$, it chooses
arm $2$ if $\xi^r_t < 0$, and it chooses the two arms with probability $1/2$ each if $\xi^r_t = 0$. This is a
{\it maximum likelihood} estimator of the safe arm. Indeed, due to the symmetry of the two bandit arms, if $g$ is a trial from one arm then $-g$
can be viewed as a trial from the other arm. Using this observation to convert observed trials
of arm $2$ to trials of arm $1$, we see that the sample mean of the resulting gains of arm $1$ is
positive exactly when $\xi^r >0$. Thus: based on the sample means available at time $t$, arm $1$ is more likely to be
safe if $\xi^r_t > 0$, arm $2$ is more likely to be safe if $\xi^r_t < 0$, and no distinction is possible if
$\xi^r_t = 0$. Since the gains of the arms at distinct time steps are independent, the order in which the
arms were chosen should be irrelevant; and since sampling either arm gives statistical information about both
arms, the information gained at each step does not depend on the player's choices. Thus, the sample
means just discussed are the {\it only} information available to the player at time $t$. In view of
this, it is difficult to imagine how a different player strategy could do better than $p^m$.

But the preceding argument is not a proof. The rest of this section provides a rigorous argument. Our argument
is in a sense inductive. In fact, starting from any minimax optimal player strategy
$p^* = (p^*_{-T},\dotsc, p^*_{-1})$ that differs from $p^m$, we consider a new strategy
$p = (p_{-T},\dotsc, p_{-1})$ obtained as follows:

\begin{enumerate}
\item If $\tau$ is the earliest time such that
\[
p^*_\tau  \neq p^m
\]
we set
\[
p_t = p^m \mbox{ for } t \leq \tau .
\]
(This leaves $p_t$ unchanged relative to $p^*$ at times $t< \tau$, and changes it to $p^m$ at time $\tau$).

\item At subsequent times $t > \tau$ we choose $p_t$ so that it is {\it statistically equivalent} to $p_t^*$.
Rather than give a formula for $p_t$, it is more convenient to say how to sample it. For any given history
of player choices and observed gains $H_{t-1} = (I_{-T}, \dotsc , I_{t-1}; g_{I_{-T},-T}, \dotsc , g_{I_{t-1},t-1})$,
the player samples $p_t$ as follows:

\begin{itemize}
\item First, the player replaces $I_\tau$ by a choice $\tilde{I}_\tau$ sampled using $p^*_\tau$ (evaluated, of
course, at the given history $H_{\tau-1}$ through time $\tau - 1$).

\item If $\tilde{I}_\tau \neq I_\tau$ then $g_{\tilde{I}_\tau,\tau}$ has not been observed; however
the statistically equivalent quantity $-g_{I_\tau,\tau}$ {\it has} been observed. So the player samples
$p_t$ by sampling $p^*_t$ evaluated at the modified history $\tilde H_{t-1}$ obtained by not only changing $I_\tau$ as
indicated above but also replacing the time $\tau$ gain $g_{I_\tau,\tau}$ by
\[
\tilde{g}_{\tilde{I}_\tau, \tau} =
\begin{cases}
g_{I_\tau, \tau} & \text{if}~\tilde I_{\tau} = I_{\tau}\\
- g_{I_\tau, \tau} &\text{if} ~\tilde I_{\tau} \neq I_{\tau}
\end {cases}.
\]
\end{itemize}
Using this procedure, the player's choices (and therefore also her gains) at times $\tau + 1, \dotsc, -1$ are
statistically identical to those obtained using $p_t^*$.
\end{enumerate}

We shall show that the strategy $p$ just defined does at least as well as $p^*$. Iterating the preceding
argument finitely many times, it follows that the strategy $p^m$ is optimal, as claimed.

\subsection {Some simplifications and preliminary calculations}

We begin by giving an alternative characterization of a minimax optimal player: it is one that maximizes
the worst-case expected player gains:
\begin{equation} \label{eq:max-min-player-gains}
\max_p \min_j \mathbb E_{a(j), p} \sum_{t \in [-T]} g_{I_{t},t} .
\end{equation}
To explain why, we observe that the player's strategy $p$ and the adversary's choice $j$ can only
influence the value of $\eta_0$ in \cref{eq:minimax_regret_transformed}. This is because
$\xi_0^r + \xi_0^h$ does not depend on $p$, and only the sign of  $\xi_0^r + \xi_0^h$, as a random
variable, depends on $j$ -- so that the expectation of  $|\xi_0^r + \xi_0^h|$ does not depend on $j$
either. Thus, to solve \eqref{eq:minimax_regret_transformed} the player needs to find the optimal $p$ for
\[
\min_p \max_j \mathbb E_{a(j),p} \sum_{t \in [-T]} \frac{1}{2} \left(g_{1,t}   + g_{2,t}- 2 g_{I_{t},t} \right).
\]
Since $\mathbb E_{a(j),p} [g_{1,t} + g_{2,t}]=0$ for all $p$ and $j$, it suffices for the player to optimize
\begin{equation}
\min_p \max_j \mathbb E_{a(j), p} \sum_{t \in [-T]} - g_{I_{t},t} =
- \max_p \min_j \mathbb E_{a(j), p} \sum_{t \in [-T]} g_{I_{t},t}.
\end{equation}
This confirms the alternative characterization \eqref{eq:max-min-player-gains}.

Next, let us write the objective of \eqref{eq:max-min-player-gains} more explicitly. We have
\begin{align} \label{eq:expected-player-gains}
\mathbb E_{a(j), p} \sum_{t \in [-T]} g_{I_{t},t}=&~
\mathbb E_{g_{-T} \sim a(j)} \langle p_{-T}, g_{-T}  \rangle + \sum_{t \in [-T+1]} \nu_t
\end{align}
where
\begin{equation} \label{eq:defn-of-nu-t}
\nu_t = \E_{a(j), p} ~ g_{I_{t},t}
\end{equation}
can be written (remembering that $p$ depends on revealed history $H_{t-1}$, as defined in \cref{eq:history}) as
\begin{align}
\nu_t =& \sum_{H_{t-1}} \mathbb E_{g_t \sim a(j)}
\langle p_t, g_{t} \rangle \text{Prob}_{a(j), p} (H_{t-1} ) \nonumber\\
&= \sum_{H_{t-1}} \Big(\frac{1}{2} -\eps (-1)^{j}  \Big(p_{t,1}-\frac{1}{2}\Big) \Big )
\text{Prob}_{a(j), p} (H_{t-1}) \label{eq:expected-gain-explicit}
\end{align}
where we sum over all possible histories available at time $t$. Moreover, in accordance with \cref{eq:regret},
\begin{align*}
\text{Prob}_{a(j), p} (H_{t-1})
=& \kappa_{H_{t-1}} \pi_{j, H_{t-1}}
\end{align*}
with the convention that if $H_{t-1}$ is the specific history under discussion,
\begin{align*}
&\kappa_{H_{t-1}} =  \text{Prob}_{p_{-T}} (I_{-T} ) \text{Prob}_{p_{-T+1}} (I_{-T+1}|H_{-T}  ) \dotsm
\text{Prob}_{p_{t-1}} (I_{t-1}|H_{t-2})
\end{align*}
and
\begin{align*}
\pi_{j, H_{t-1}} &= \text{Prob}_{a(j)} (g_{I_{-T}, {-T}}) \text{Prob}_{a(j)} (g_{I_{-T+1}, -T+1}) \dotsm
\text{Prob}_{a(j)} (g_{I_{t-1}, t-1})\\
& = \text{Prob}_{a(j)} (g_{I_{-T:t-1}}).
\end{align*}
Note that $\kappa_{H_{t-1}}$ does not depend on $j$; this reflects the fact that the player's strategy
depends only on the history that was revealed to her (she does not know $j$).

We emphasize that 
$p_t$ is function of histories taking values in the space of probability distributions on the two arms. For example, given a strategy $p$ and history $H_{-T} =(I_{-T},g_{I_{-T},-T} )$ available after the first prediction round at time ${-T}$,
\[
\text{Prob}_{p_{-T+1}} (I_{-T+1}|H_{-T} )
\]  is the probability that this player chooses arm $I_{-T+1}$ at time $-T+1$ if at time $-T$ she chose arm $I_{-T}$ and received the gain $g_{I_{-T},-T}$.

The probability of a particular sequence of gains is easily made explicit. The calculation is simplest
when the gains are $0$ and $1$. For any list of revealed $0/1$ gains $g_{I_{-T:t-1}}$ at time $t$, let $s_i$
be the number of times arm $i$ was chosen, and let $G_i = \sum_{s < t} g_{i,s} \1_{I_s = i}$ be the sum
of the revealed gains from arm $i$. Then
\begin{align} \label {eq:prob_g}
\text{Prob}_{a(j)} (g_{I_{-T:t-1}})  =
\left(\frac{1+ \epsilon }{2} \right)^{G_j}\left(\frac{1- \epsilon}{2} \right)^{s_j -G_j}
\left(\frac{1 - \epsilon}{2} \right)^{G_m}\left (\frac{1+ \epsilon}{2} \right)^{s_m -G_m}
\end{align}
and $m=2$ if $j =1$ and $m=1$ if $m=2$.\emph{We will omit the subscript of $H$ when doing so is not expected  to cause confusion.}  Since \eqref{eq:prob_g} is, by definition, the value of
$\pi_{j,H}$, a little algebra reveals that
\begin{equation} \label{eq:ratio-of-probabilities}
\frac{\pi_{1,H}}{\pi_{2,H}} = \left( \frac{1+\epsilon}{1-\epsilon} \right)^{2G_1 - 2G_2 + s_2 - s_1}.
\end{equation}
 Evidently, $\pi_{1,H} > \pi_{2,H}$ exactly the exponent on the right is positive. Since $\pi_{j,H}$ is the probability of the given sequence of gains if arm $j$ is safe, we have confirmed
that $p^m$ chooses the arm that, by a maximum likelihood estimate, is more likely to be safe, given the
observed sequence.

Since we prefer to work with centered gains (taking the values $\pm 1$), let us put the
preceding calculation in those terms. To avoid confusion, for this paragraph (only) we denote the centered gains
by $\hat{g}_i$ (so $\hat{g}_i = 2 g_i -1$) and we write
$\hat{G}_i = \sum_{\tau < t} g_{i,\tau} \1_{I_\tau = i}$ for the analogue of $G$ using centered gains. Then
one easily checks that $\hat{G}_i = 2 G_i - s_i$, so that the exponent on the right side of
\eqref{eq:ratio-of-probabilities} is just $\hat{G}_1 - \hat{G}_2 = \xi^r$. This agrees, of course,
with our earlier argument that the sign of $\xi^r$ determines which arm is more likely to be safe, given
the observed gains. \emph{For the remainder of this appendix, we will continue to work with the
centered gains, but (as in the body of the paper) we shall write $g_i$ not $\hat{g}_i$ to avoid notational
clutter.}

\subsection{The optimality of $p^m$}

We are ready to explain the optimality of $p^m$. Recall the plan indicated earlier:
given an optimal strategy $p^*$, we consider the first time $\tau$ when it differs from $p^m$, and we consider the
alternative strategy (discussed earlier) that uses $p^m$ at time $\tau$ and is statistically equivalent to $p_t^*$
for $t > \tau$. Our goal is to show that the player's worst case expected gains \eqref{eq:max-min-player-gains}
are at least as large under the alternative strategy as under $p^*$.

Since the alternative strategy is statistically identical to $p^*$ at times other than $\tau$, we may focus exclusively
on the situation at time $\tau$.

The case $\tau = -T$ is simple but instructive. At the initial time there is no history and $\xi^r = 0$, so
$p_{-T}^* = (p^*_{-T,1},p^*_{-T,2})$ is just a probability distribution on the two arms and $p^m = (1/2,1/2)$.
When we restrict our attention to time $-T$, the max-min \eqref{eq:max-min-player-gains} becomes
$$
\max_{0 \leq p_{-T,1} \leq 1} \min_{j=1,2} \E_{g_{-T} \sim a(j)} \langle p_{-T}, g_{-T}  \rangle,
$$
which reduces by simple algebra to
$$
\max_{0 \leq p_{-T,1} \leq 1} \min_{j=1,2} \Big(\frac{1}{2} -\eps (-1)^{j} \Big(p_{-T,1}-\frac{1}{2}\Big)\Big).
$$
The optimal $p_{-T,1}$ is easily seen to be $1/2$ -- the value chosen by $p^m$; moreover, choosing this value
makes the player \emph{indifferent} to whether $j=1$ or $2$ (that is, the player is indifferent to the adversary's
choice which arm is safe).

For $\tau > -T$, the argument is similar in spirit though the details are more involved. We shall show that
among strategies satisfying $p_t = p^m$ for $-T \leq t < \tau$, the choice $p_{\tau} = p^m$ is optimal for
\begin{equation} \label{eq:max-min-of-nu-tau}
\max_{p} \min_{j=1,2} ~ \nu_\tau
\end{equation}
where $\nu_\tau$ is defined by \eqref{eq:defn-of-nu-t}; moreover, the proof will reveal that this choice
makes the player indifferent at time $\tau$ to whether $j=1$ or $j=2$.

The argument relies on grouping the histories in a convenient way. Given any history
$H_{\tau}= (I_{-T:\tau},g_{I_{-T:\tau}})$, we say $H^c_{\tau}= (J_{-T:\tau},g_{J_{-T:\tau}})$ is its complement if $H^c_{\tau}$ lists the same
gains but attributes them to the opposite arms; thus, for example, if $\tau = -T + 3$, the complement of
$H_{\tau} = (1,1,2,+,+,-)$ is $H^c_{\tau} = (2,2,1,+,+,-)$. \emph{We will also omit the subscript $H^c$ when doing is not expected to cause confusion.} 
Notice that every history has a complement, no history is its own complement, and if $H^c$ is the complement of $H$ then $H$ is the complement of $H^c$.  Given a complementary pair $H$ and $H^c$, we introduce the notation
\begin{align*}
&p_{H} := \text{Prob} (I_{\tau}=1 | H_{\tau-1})~\text{and}~
p_{H^c} := \text{Prob} (I_{\tau}=1 |  H^c_{\tau-1})
\end{align*}
and we introduce the analogues for $H^c$ of $\kappa_{H}$ and $\pi_{j,H}$,
\begin{align*}
&\kappa_{H^c} = \text{Prob}_{p_{-T}} (J_{-T} ) \text{Prob}_{p_{-T+1}} (J_{-T+1}|H^c_{-T}  ) \dotsm
\text{Prob}_{p_{\tau-1}} (J_{\tau-1}|H^c_{\tau-2}) \\
&\pi_{j, H^c}  = \text{Prob}_{a(j)} (g_{J_{-T:\tau-1}}).
\end{align*}
Since $\pi_{1, H}=\pi_{2, H^c} $ and $\pi_{2,H}=\pi_{1,H^c}$, it is convenient to group the
terms in $\nu_\tau$ as follows:
\begin{align}
 &(-1)^{j} \Big( \Big(p_H-\frac{1}{2}\Big) \kappa_H \pi_{j, H} +
\Big(p_{H^c}-\frac{1}{2}\Big) \kappa_{H^c} \pi_{j, H^c} \Big ) \nonumber \\
&=\begin{cases}
\Big(\frac{1}{2}-p_H\Big) \kappa_H \pi_{1, H} +
\Big(\frac{1}{2}-p_{H^c} \Big) \kappa_{H^c} \pi_{1,H^c} & \text{if}~ j=1 \\
\Big(p_H-\frac{1}{2}\Big) \kappa_H \pi_{2, H} +
\Big(p_{H^c}-\frac{1}{2}\Big) \kappa_{H^c} \pi_{2, H^c} & \text{if}~ j=2
\end{cases} \nonumber\\
&=\begin{cases}
\Big(\frac{1}{2}-p_H \Big) \kappa_H \pi_{1, H} +
\Big(\frac{1}{2}-p_{H^c} \Big) \kappa_{H^c} \pi_{2, H}  & \text{if}~ j=1  \\
\Big(p_H-\frac{1}{2}\Big) \kappa_H \pi_{2, H} +
\Big(p_{H^c}-\frac{1}{2}\Big) \kappa_{H^c} \pi_{1, H}  & \text{if}~ j=2
\end{cases} \label {eq:nu_term}
\end{align}
Now, recall that the strategies $p$ under consideration here have $p_t= p_t^m$ for $t < \tau$, and that
$p_t^m$ is determined by the sign of $\xi^r_t$. If we treat $\xi^r_t=\xi^r (g_{I_{-T:t-1}})$
as a function of history, it is straightforward to see that when $H$ and $H^c$ are complementary,
\[
\xi^r (g_{I_{-T:t-1}}) = -\xi^r (g_{J_{-T:t-1}}).
\]
(It is important here that $p^m(\xi_t^r) = \left(\frac{1}{2},\frac{1}{2} \right)$ if $\xi_t^r=0$.)
Thus, $p^m (\xi^r(g_{I_{-T:t-1}}))$ chooses arm $1$ whenever $p^m (\xi^r(g_{J_{-T:t-1}}))$
chooses arm $2$, and vice versa. It follows that for the strategies under consideration,
\[
\text{Prob}_{p_{t-1}} (I_{t-1}|H_{t-2}) =
\text{Prob}_{p_{t-1}} (J_{t-1}|H^c_{t-2})
\]
for $t \leq \tau$, and therefore
\begin{align}
\kappa_H = \kappa_{H^c} \label{eq:k_I_k_J}.
\end{align}

We now apply these observations to identification of the optimal $p$ for \eqref{eq:max-min-of-nu-tau}, which
by \eqref{eq:expected-gain-explicit} amounts to
$$
\max_p \min_j \sum_{H_{\tau-1}}
\Big(\frac{1}{2} -\eps (-1)^{j}  \Big(p_{\tau,1}-\frac{1}{2}\Big) \Big )
\text{Prob}_{a(j), p} (H_{\tau-1} ) .
$$
Only the term with a factor of $(-1)^j$ depends on $p$, so it suffices to consider
$$
\min_p \max_j \sum_{H_{\tau-1}}
\eps (-1)^{j}  \Big(p_{\tau,1}-\frac{1}{2}\Big)
\text{Prob}_{a(j), p} (H_{\tau-1} ) .
$$
Grouping the histories into complementary pairs and using \eqref{eq:nu_term} combined with \eqref{eq:k_I_k_J},
we see that this problem can be written in the form
\begin{align*}
\min_{0\leq p_{H_i}, p_{H_i^c} \leq 1}  \max \left ( \substack{
\sum_{i}
\kappa_{H_i} \Big(\frac{1}{2}-p_{H_i}\Big)\pi_{1, H_i} + \kappa_{H_i}  \Big(\frac{1}{2}-p_{H_i^c}\Big)  \pi_{2, H_i} \\
\sum_{_i}
\kappa_{H_i}  \Big(p_{H_i}-\frac{1}{2}\Big) \pi_{2, H_i} +\kappa_{H_i}   \Big(p_{H_i^c}-\frac{1}{2}\Big)  \pi_{1, H_i}} \right),
\end{align*}
where the summation is over all pairs of complementary strategies (chosen so that each strategy appears
just once).  Here the subscript $i$ indexes all possible histories through time $\tau-1$ but we omit the dependence of $H_i$ and $H^c_i$ on $\tau-1$ for simplicity. One easily sees that this optimization fits the conditions of \cref{lemma:minimax} below,
if for a given pair of complementary histories $H_i, H^c_i$ through time $\tau-1$  we take
$x_i =\frac{1}{2}- p_{H_i}$, $y_i = \frac{1}{2}-p_{H^c_i}$, $a_i =\kappa_{H_i} \pi_{1, H_i}$, and
$b_i =\kappa_{H_i} \pi_{2, H_i}$. 

\begin{lemma}\label{lemma:minimax} Let $a$ and $b$ be arbitrary vectors in $\R^d$. Then
\begin{displaymath}
\min_{-1/2 \leq x_i, y_i \leq 1/2} \max \Big (
 \substack{\langle x, a \rangle+ \langle y, b  \rangle, \\
 -\langle x, b \rangle- \langle y, a \rangle}  \Big)
 \end{displaymath}
is achieved when
\[
\begin{cases}
x_i^* = -1/2, y_i = 1/2 & \text{if} ~a_i >b_i\\
x_i^* +y_i^* = 0 & \text{if}~ a_i =b_i\\
x_i^* = 1/2, y_i^* = -1/2 & \text{if} ~a_i <b_i.
\end{cases}
\]
Moreover, at any optimal $(x,y)$ the values of $\langle x, a \rangle+ \langle y, b  \rangle$ and
$ -\langle x, b \rangle- \langle y, a \rangle$ are equal.
\end{lemma}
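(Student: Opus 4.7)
The plan is to reduce the saddle-point problem to a coordinatewise linear program via a single algebraic identity. Letting $f_1(x,y) := \langle x,a\rangle + \langle y,b\rangle$ and $f_2(x,y) := -\langle x,b\rangle - \langle y,a\rangle$, a one-line computation yields the key identity
\[
f_1 + f_2 \;=\; \langle x - y,\, a - b\rangle.
\]
Combining $\max(f_1,f_2)\geq \tfrac{1}{2}(f_1+f_2)$ with the pointwise bound $|x_i - y_i|\leq 1$ on the feasible box gives
\[
\max(f_1,f_2) \;\geq\; \tfrac{1}{2}\langle x-y,\,a-b\rangle \;\geq\; -\tfrac{1}{2}\sum_i |a_i-b_i|,
\]
which is my candidate for the optimal value $V^{*}$.

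Next I would verify that this lower bound is attained by the prescribed candidate. Setting $y = -x$ makes $f_1 = f_2 = \langle x, a-b\rangle$, so the outer minimization collapses to minimizing a linear function over $|x_i|\leq 1/2$; this is solved coordinatewise by $x_i^{*} = -\tfrac{1}{2}\,\mathrm{sgn}(a_i - b_i)$ (arbitrary if $a_i = b_i$) together with $y_i^{*} = -x_i^{*}$, which exactly matches the three cases in the statement and attains $-\tfrac{1}{2}\sum_i |a_i - b_i|$.

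For the ``moreover'' claim, suppose $(x^{*},y^{*})$ is \emph{any} minimizer and let $V^{*} = -\tfrac{1}{2}\sum_i |a_i - b_i|$. From $f_k(x^{*},y^{*})\leq V^{*}$ for $k=1,2$, summing yields $\langle x^{*}-y^{*}, a-b\rangle \leq 2V^{*}$, while the lower bound chain derived above forces the reverse inequality. Equality throughout therefore gives $f_1(x^{*},y^{*}) = f_2(x^{*},y^{*}) = V^{*}$, which is the indifference assertion, and also $x_i^{*} - y_i^{*} = -\mathrm{sgn}(a_i - b_i)$ whenever $a_i \neq b_i$, forcing $(x_i^{*},y_i^{*}) = (-1/2,\,1/2)$ when $a_i > b_i$ and $(1/2,-1/2)$ when $a_i < b_i$.

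I do not anticipate any real obstacle: the key identity separates the problem across coordinates and the box constraints are linear. The only minor delicacy is distinguishing which aspects of the prescription are strictly necessary (the sign conditions on coordinates with $a_i\neq b_i$, and $x_i^{*}+y_i^{*}=0$ when $a_i=b_i\neq 0$) from those that are merely sufficient (on coordinates with $a_i=b_i=0$ any feasible pair works, but a selection satisfying $x_i^{*}+y_i^{*}=0$ is the one recorded by the lemma).
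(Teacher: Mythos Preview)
Your proof is correct and follows essentially the same approach as the paper: both arguments hinge on the identity $f_1+f_2=\langle x-y,\,a-b\rangle$, reduce to the antisymmetric slice $y=-x$, and then minimize the resulting linear functional coordinatewise. The only cosmetic difference is that the paper writes $\max(f_1,f_2)=\tfrac12(f_1+f_2)+\tfrac12|f_1-f_2|=\tfrac12\bigl(\langle x-y,a-b\rangle+|\langle x+y,a+b\rangle|\bigr)$ and then uses the symmetrization $(x,y)\mapsto\bigl((x-y)/2,(y-x)/2\bigr)$ to kill the absolute-value term, whereas you use the inequality $\max\geq\tfrac12(f_1+f_2)$ directly and verify tightness afterward; your equality argument for the ``moreover'' clause is actually a bit more explicit than the paper's.
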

\begin{proof}
Since for any real valued $f$ and $g $, $\max (f, g) = \frac{1}{2} (f+g) + \frac{1}{2} |f-g|$, we have
\begin{align}
& \max \Big (
 \substack{\langle x, a  \rangle+ \langle y, b  \rangle, \\
 -\langle x, b   \rangle- \langle y, a \rangle}  \Big)=  \frac{1}{2}\Big  ( \langle x- y, a -b \rangle +  |  \langle x+y , a +b \rangle   | \Big) . \label{eq:minmax_objective}
\end{align}
It suffices to consider $(x,y)$ such that $x_i + y_i = 0$ for each $i$. Indeed, for any admissible
$x$ and $y$, the vectors $x' = (x - y)/2$ and $y' = (y-x)/2$ are also admissible,
and $x'-y' = x-y$ while $x'+y' = 0$, so the value of our objective at $(x',y')$ is at least as good as the value
at $(x,y)$. The assertion of the lemma is now clear, by optimizing the linear function $\langle x-y, a-b \rangle$.
(We remark -- though this will not be used -- that the $x_i^*$ and $y_i^*$ identified above are in fact the
only optimal choices, except that when $a_i = b_i = 0$ then $x_i$ and $y_i$ can
take any admissible value.)
\end{proof}

The lemma shows that an optimal strategy is obtained by taking $p_H=1$ and $p_{H^c}=0$ if $\pi_{1, H} > \pi_{2, H}$,
$p_H=1/2$ and $p_{H^c}=1/2$ if $\pi_{1, H} = \pi_{2, H}$, and $p_H=0$ and $p_{H^c}=1$ if $\pi_{1, H} <\pi_{2, H}$.\footnote{Since $\kappa_H\geq 0$, the ordering of $a_i =\kappa_H \pi_{1, H}$, and $b_i =\kappa_H \pi_{2, H}$
is the same as ordering of $\pi_{1, H}$, and $ \pi_{2, H}$ when $\kappa_H> 0$. When $\kappa_H= 0$, the
ordering of $a_i$ and $b_i$ does not matter.} Essentially, this strategy chooses the 
arm $i$ for which $\pi_{i, H}$ is larger. Since
\[
\pi_{1, H}/{\pi_{2, H}} =  \left(\frac{1+ \epsilon}{1 - \epsilon} \right)^{\xi^r}
\]
the optimal strategy just identified is in fact $p_\tau^m$. The lemma also assures us that this strategy makes the player
indifferent (through time $\tau$) to the choice of the safe arm $j$.

As noted earlier, after repeating this argument finitely many times, we conclude that it is optimal to use
the strategy $p^m$ at every time (through $t=-1$), and that the final-time
regret does not depend upon which arm is safe (in other words, $R_T(p^m,a(1)) = R_T(p^m,a(2))$.

The proof that $p^m$ is also optimal the context of pseudoregret is essentially the same, so we omit it.
 
\section{Proof of \cref{lemma:pde_soln}}
\label{app:pde_soln}   
After a change of coordinates $z=\xi^h +  \xi^r$, \cref{eq:homogen_pde} becomes a 1D heat equation
\begin{subequations}
\begin{align*}
&u_t + \kappa u_{zz} = 0 \\
&u(\eta, z,0) = \frac{1}{2}(\eta +|z|)  
\end{align*}
\end{subequations}
and its unique smooth solution $u^h$ is therefore \cref{eq:phi}. Since $\varphi$ does not depend on $\eta$ or $\xi^h$,  \cref{eq:homogen_pde1}  with the final value $\varphi(\xi^r, 0)$ is also a 1D heat equation
 \begin{subequations}
\begin{align*}
&u_t + \frac{\kappa}{2}u_{\xi^r \xi^r} = 0 \\
&u(\xi^r,0) =\varphi(\xi^r, 0)  
\end{align*}
\end{subequations}
and its unique smooth solution $ \hat \varphi$ is therefore \cref{eq:hatphi}.

\section{Proof of \cref {lemma:error}}
 \label {app:error}

\subsection{Derivatives of $u^h$}

Since we can put one derivatives under the integral on the absolute value function and the remaining derivatives on the fundamental solution $\Phi$, for $d \geq 1$,
\begin{align}
|\partial^{d}_z u^h| &= \frac{1}{2}\left | \int_{\R} \partial^{d-1}_z\Phi(z  - s,2t)  \partial_s  |s |ds\right | \leq   \frac{1}{2} \int_{\mathbbm R} |\partial^{d-1}_s \Phi (s, 2t)| ds = O \Big( |\kappa t  |^{\frac{1-d}{2}} \Big). \label{eq:u_h_bound}
\end{align}
Since $u^h_t = -{\kappa} u^h_{zz}$,  we have $u^h_{tt} = \kappa^2\partial^{4}_z u^h$. Therefore, $u^h_{tt}  =O\Big (\sqrt {\kappa} |t|^{-\frac{3}{2}}  \Big)$. 

\subsection{Derivatives of $\hat \varphi$}
It is elementary that  $\varphi'     = O(1 + b  \epsilon/ \sigma)$,  and for $d\geq 2$, $\varphi^{(d)}  = O (b (\epsilon/\sigma)^d)$.  Similarly to \eqref{eq:u_h_bound}, for $d \geq 1$, we can put one derivative on $\varphi$
\begin{align}
\partial^d_{\xi^r} \hat \varphi     =O \left( (1 + b \epsilon /\sigma)  |\kappa t |^{\frac{1-d}{2}}\right)~ \text{and}~ \hat \varphi_{tt}     =O \left( (1 + b  \epsilon/\sigma) \sqrt {\kappa} | t |^{-\frac{3}{2}}\right). \label {dhatvarphi} 
\end{align}

However if $b =\kappa/ \sigma$, i.e. $\varphi$ is $C^1$, and we can put two derivatives on $\varphi$: for $d \geq 2$, 
\begin{align*}
\left |\partial^d_{\xi^r} \hat \varphi (\xi^r, t)\right |&=  \left |\int_{\R} \partial^{d-2}_{\xi^r} \Phi(\xi^r-s, t) \varphi'' (s) ds \right| \leq  \max_{s \in \R } |\varphi''(s)| \int_{\mathbbm R} |\partial^{d-2}_{s}\Phi (s,t)| ds =O \left( \frac{\epsilon}{\sigma}  | \kappa t |^{1-\frac{d}{2}}\right)
\end{align*}
and since $\int_{\mathbbm R} |\varphi''(s)| ds = O\Big(b \frac{\eps}{\sigma}\Big)$, and $\partial^{d}_{s}\Phi (s, t) =O( |\kappa t|^{-\frac{d+1}{2}})$,
\begin{align*}
&|\partial^d_{\xi^r} \hat \varphi (\xi^r, t) |  \leq  \max_{s \in \R} |\partial^{d-2}_{s}\Phi (s,t) | \int_{\mathbbm R} |\varphi''(s|) ds =O \left( b\frac{\eps}{\sigma}  |\kappa t |^{\frac{1-d}{2}}\right)=O \left( \kappa^{\frac{1-d}{2}}  | t |^{\frac{1-d}{2}}\right).
\end{align*}
Therefore, for $d \geq 2$,
\begin{align}
\partial^d_{\xi^r} \hat \varphi  = O \left( \min\Big( \frac{ \epsilon}{\sigma},  | t |^{-\frac{1}{2}} \Big)|\kappa t |^{1-\frac{d}{2}} \right). \label {dxirhatvarphi} 
\end{align}
Also since  $\hat \varphi_{tt} =  \frac{\kappa^2}{4} \partial^4_{\xi^r} \hat \varphi$, we have
\begin{align*}
\hat \varphi_{tt} = O \left( \min\Big( \frac{\epsilon}{\sigma}  , | t |^{-\frac{1}{2}} \Big)| \kappa t |^{-1} \right).  
\end{align*}

\section{Proof of \cref{thm:lb}} 
\label{app:lb}
We will show that 
\[
|u(\eta,\xi, t)- v(\eta,\xi t)| \leq E_1(t)
\] where $E_1$ is given by \cref{eq:et} in two steps.  In the first step, we  establish the upper bound: 
\begin{align} \label{eq:u_lb}
\mathbb E_{a,p^m} ~u(\eta+d\eta,\xi+d\xi , t+1)- u (\eta,\xi, t) \leq K(t)
\end{align}
uniformly in $\eta$ and $\xi$ where $K(t)$ is given by \cref{eq:K} and \cref{eq:Kasy}. Since $u^h$ and therefore $u$ is not differentiable at $t=0$ and $\xi^r + \xi^h =0$, in \cref{app:final_period} we consider the final prediction period separately from the earlier periods.  Also \cref{app:periods_before_final},  we will treat separately the region where $\xi^r +\eps t > 0$ or $\xi^r +\eps t < 0$  where $u$ is smooth (\ref{app:positive_xir} and \ref{app:negative_xir}) from the region where $\xi^r+\eps t =0$   where $\varphi''$ and therefore $u_{\xi^r \xi^r}$ are discontinuous (\ref{app:zero_xir}).  Since the lower bound 
\[
- K(t) \leq \mathbb E_{a,p^m} u (\eta+d\eta,\xi+d\xi , t+1)- u  (\eta,\xi, t) 
\]
can be proved similarly to the upper bound, we omit the proof of lower bound to avoid repetition. 

 The second step connects $u$ and $v$. Since $v$ is defined by the iterative scheme \eqref{eq:w_dp},  in \cref{app:step2}, we show that $u(\eta,\xi, t)- v(\eta,\xi, t) \leq E_1(t)$ by induction starting from the final time.  The proof that $- E_1(t)\leq u(\eta,\xi, t)-v(\eta,\xi, t) $ is similar and therefore is omitted.

\subsection{Final period} 
\label{app:final_period} We consider the evolution of $u$ during the final prediction period  ($t$ changes from $-1$ to $0$). Since $u^h(\eta, z,0) =  \mu (\eta, z)$, 
\begin{align}
&\left |\mu(\eta+ d\eta, z+ dz) - u^h(\eta, z, -1)\right|  \leq \Big |\mu(\eta+ d\eta, z+ dz) - \mu(\eta, z)\Big | + \left| \mu(\eta, z) - u^h(\eta, z, -1)\right| \label{eq:u_t_eq_0}
\end{align}
is bounded  above uniformly in $\eta$, $z$ and $\epsilon$. Since the absolute values of $d\eta$ and $dz$ are uniformly bounded, then so is $|\mu(\eta+ d\eta, z+ dz) - \mu(\eta, z)|$. Also since $ -| z-s|  \geq -|z| -|s|$, we obtain
\begin{align*}
& \mu(\eta, z) - u^h(\eta,z, -1)   =    \mu (\eta, z) - \int_{\mathbbm R} \Phi(s, -1) \mu (\eta, z-s) ds\\
 &=  \int_{\mathbbm R} \Phi(s, -1) (|z| - |z-s|) ds \geq  - \int_{\R} \Phi(s, -1) |s| ds
\end{align*}
which is uniformly bounded from below. It is also bounded uniformly from above since $ - |z-s| \leq -| z| +  | s|$.  Therefore,  \cref{eq:u_t_eq_0}
is bounded above by a constant uniformly in $\eta$, $\xi$ and $\epsilon$.  

Arguing as in the previous paragraph, we have
\begin{align*}
&\left |u^n(\xi^r+ d\xi^r,  0) - u^n(\xi^r, -1)\right|  \leq \Big |u^n(\xi^r+ d\xi^r, 0) - u^n(\xi^r, 0)\Big | + \left| u^n(\xi^r, 0) - u^n(\xi^r, -1)\right| 
\end{align*}
The first term vanishes since $u^n(\xi^r, 0)  = 0$, while the second term is bounded uniformly in $\xi^r$ since
\begin{align*}
&\left | u^n( \xi^r, -1) \right |= \left |\int_{\R} \Phi( \xi^r-s,-1) (\varphi(\xi^r)-\varphi(s)) ds \right |\\
& \leq \max_{s\in \R} \left |\varphi'(s) \right | \int_{\R} \Phi( \xi^r-s,-1) \left |s-\xi^r \right| ds =  \max_{s\in \R} \left |\varphi'(s) \right | \int_{\R} \Phi( s,-1) \left |s \right| ds.
\end{align*} 

\subsection{Periods before the final one}
 \label{app:periods_before_final} Now we consider the evolution of $u$ before the final prediction period, i.e., at $t \leq -2$.  By the rules of the game $\xi^r$ only takes integer values.  Since $u(\eta+c, \xi, t) =u(\eta, \xi, t) +c/2$ for any $c \in \R$,
\begin{align*}
&\mathbb E_{a, p^m} ~u  (\eta+d\eta,\xi+d\xi , t+1)\\
&=\begin{cases}
- \epsilon +\mathbb E_{a} ~u(\eta,  \xi^h-g_2-\eps, \xi^r+ g_1+\eps,t+1)  &\text {if}~\xi^r +\eps t \geq 1  \\
  \frac{1}{2}  \mathbb E_{a} [ u(\eta, \xi^h-g_2-\eps, \xi^r+ g_1+\eps,t+1)+ u(\eta, \xi^h+g_1-\eps,\xi^r-g_2-\eps, t+1) ]&\text {if} ~\xi^r+\eps t = 0\\
\epsilon+ \mathbb E_{a} ~u(\eta, \xi^h+g_1-\eps,\xi^r- g_2-\eps, t+1) &\text {if}~\xi^r +\eps t\leq -1
 \end {cases} \\
 &= \mathbb E_{a} ~u(\eta,  \xi^h-g_2 -\eps, \xi^r+ g_1-\eps,t+1)  +\begin{cases}
- \epsilon &\text {if}~\xi^r +\eps t\geq 1  \\
0&\text {if} ~\xi^r+\eps t = 0\\
\epsilon &\text {if}~\xi^r+\eps t \leq -1
 \end {cases} 
\end{align*} 
where the last equality holds because the laws of $-g_2$ and $g_1$ are the same.  We consider
\begin{subequations}{\label{eq:u_evolution}}
 \begin{align}
&u(\eta,  \xi^h-g_2-\eps, \xi^r+ g_1-\eps,t+1)- u(\eta,  \xi, t) \nonumber\\
&= u^h (\eta,\xi^h + \xi^r + g_1-g_2-2 \eps,t+1) - u^h (\eta,\xi^h + \xi^r ,t) \label{eq:uh_evolution}\\
&-\hat \varphi ( \xi^r + g_1- \eps, t+1) +\hat \varphi   ( \xi^r ,t) \label{eq:hat_phi_evolution} \\
&+ \varphi ( \xi^r + g_1+  \eps t) - \varphi   ( \xi^r + \eps t) \label{eq:phi_evolution}
\end{align} 
\end{subequations}
By Taylor expansion \cref{eq:uh_evolution} is given by
 \begin{align*}
&  A:= u^h_t + (g_1-g_2-2 \eps) u_z^h + \frac{1}{2} (g_1-g_2-2 \eps)^2 u_{zz}^h + K^h
\end{align*} 
where all the derivatives are evaluated at $(\eta,\xi^h + \xi^r ,t+1)$  and 
\begin{align*}
&K^h  = \frac{1}{6}(g_1-g_2-2 \eps)^3 \partial^3_z u^h(\xi^h+\xi^r,t+1)\\
&+ (g_1-g_2-2 \eps)^4\int_0^1 \partial^4_z u^h (\xi^h+\xi^r+\mu (g_1-g_2-2 \eps) ,t+1)\frac{(1-\mu)^3}{6}d\mu  -\int_0^1 u^h_{tt} (\eta, \xi, t+ \mu)(1-\mu)d\mu 
\end{align*}
Since  the expectation of the terms involving $u_z$ is zero and  $\E_a [ (g_1-g_2-2 \eps)^2]= 2\kappa $,
 \begin{align*}
& \E_a [A]:= u^h_t  + \kappa u_{zz}^h + \E_a [ K^h].
\end{align*} 
Since the expectation of the third order term  is also zero  and $\E_a [ (g_1-g_2-2 \eps)^4]= 8\kappa $, by \cref {lemma:error}
\begin{align*}
\E_a [K^h]  =& O(\kappa \partial^4_z u^h+ u^h_{tt}) = O(\sqrt {\kappa} |t|^{-\frac{3}{2}} )
\end{align*}  Finally,  since $u^h_t  + \kappa u_{zz}^h =0$,  we have
 \begin{align}
& \E_a [A]=  O(\sqrt {\kappa} |t|^{-\frac{3}{2}} ). \label{eq:EA}
\end{align}
Similarly \cref{eq:hat_phi_evolution} is given by
 \begin{align*}
&  B:= \hat \varphi_t+  (  g_1- \eps)\hat \varphi_{\xi^r} + \frac{1}{2} (g_1- \eps)^2 \hat \varphi_{\xi^r\xi^r} + \hat K
\end{align*} 
where all the derivatives are evaluated at $(\xi^r ,t+1)$  and 
\begin{align*}
\hat K  =& \frac{1}{6}(g_1- \eps)^3 \partial^3_{\xi^r}  \hat \varphi(\xi^r,t+1)+ (g_1- \eps)^4\int_0^1 \partial^4_{\xi^r} \hat \varphi (\xi^r+\mu (g_1- \eps) ,t+1)\frac{(1-\mu)^3}{6}d\mu \\
& -\int_0^1 \hat \varphi_{tt} ( \xi^r, t+ \mu)(1-\mu)d\mu 
\end{align*}
Since  the expectation of the first order terms is again zero, and  $\E_a [ (g_1- \eps)^2]= \kappa $
 \begin{align*}
& \E_a [B]:=\hat \varphi_t  + \frac{\kappa}{2} \hat \varphi_{\xi^r\xi^r} +  \E_a [ \hat K].
\end{align*} 
Since  the expectation of the third order term in $\hat K$ is zero  and $\E_a [ (g_1- \eps)^4]= \kappa (3\eps^2+1)$, for $b = \frac{ \kappa}{\eps}$, by \cref {lemma:error},
\begin{align}
\E_a [\hat K]  =& O(\kappa \partial^4_{\xi^r} \hat \varphi+ \hat \varphi_{tt}) = O( \min\Big( \epsilon , \kappa | t |^{-\frac{1}{2}}\Big ) | t |^{-1}). \label{eq:EhatK}
\end{align} Thus, using $\hat \varphi_t  +  \frac{\kappa}{2}  \hat \varphi_{\xi^r\xi^r} =0$, we have
 \begin{align*}
& \E_a [B]=  O( \min\Big( \epsilon , \kappa | t |^{-\frac{1}{2}}\Big ) | t |^{-1})
\end{align*} 
as well.

It remains to consider the evolution of $\varphi$. Since  $\varphi$ is at most $C^1$ at $\xi^r + \eps t=0$, we consider the following 3 cases. 

\subsubsection{ $\xi^r + \eps t \geq 1$}  \label{app:positive_xir} The function $\varphi$ is $C^\infty$ for $|\xi^r + \eps t| >0$ and $t<0$. Therefore, we can use its Taylor's expansion of \cref{eq:phi_evolution}. Since  $\varphi ''(y) = 4b (\frac{\eps}{\sigma})^2 \exp (-\frac{2\eps}{\sigma} y)$, 
\begin{align*}
&C:= \varphi ( y+ g_1) - \varphi   (y)  = g_1\varphi' (y) + g_1^2 \int_0^1  \varphi'' (y+ \mu g_1)(1-\mu)d\mu\\
 & = g_1\varphi' (y) +\zeta \varphi ''(y) 
\end{align*} 
where
\[
\zeta =  \int_0^1  \exp \Big(-\frac{2\eps}{\sigma} \mu g_1\Big)(1-\mu)d\mu = \frac{1}{4(\frac{\eps}{\sigma}g_1)^2} \Big(\frac{2\eps}{\sigma}g_1 +\exp\Big (-\frac{2\eps}{\sigma}  g_1\Big)-1\Big)
 \]

When $\mu =1$,  the above integrand is not defined at $ \xi^r+ \mu g_1 = 0$, i.e., when $\xi^r = 1$  and $g_1=-1$. However, since this occurs at an endpoint of the integration interval, we can simply ignore it for the purpose of evaluating the integral.  Therefore, 
 \begin{align}
&E_a [\zeta] =  \frac{1}{4(\frac{\eps}{\sigma})^2} \Big(\frac{2\eps}{\sigma}\eps +\exp\Big (-\frac{2\eps}{\sigma} \Big)(1+\eps)/2+\exp\Big (\frac{2\eps}{\sigma} \Big)(1-\eps)/2\Big) \nonumber\\
&=  \frac{1}{4(\frac{\eps}{\sigma})^2} \Big(\frac{2\eps}{\sigma}\eps +\exp\Big (-\frac{2\eps}{\sigma} \Big)(1+\eps)/2+\exp\Big (\frac{2\eps}{\sigma} \Big)(1-\eps)/2\Big) \nonumber \\
&=   \frac{\sigma}{2}  +\frac{1}{4(\frac{\eps}{\sigma})^2} \Big(\exp\Big (-\frac{2\eps}{\sigma} \Big)(1+\eps)/2+\exp\Big (\frac{2\eps}{\sigma} \Big)(1-\eps)/2 -1\Big) \label{eq:Ea_zeta}
 \end{align}
For $\sigma = \kappa$ we have  $\eps \varphi'  + \frac{\kappa}{2} \varphi '' =\eps$, and therefore
 \begin{align*}
  \E_a [C]&=  \eps \varphi'  + \frac{\kappa}{2} \varphi '' + b \nu (\eps) \exp (-\frac{2\eps}{\kappa} y)=\eps  +   K^+(t).
\end{align*}
where 
\begin{align}
K^+(t) =O \left(  b \nu (\eps) \right) \label{eq:K+}
\end{align} and 
\[
\nu(\eps) =  \Big(\exp\Big (-\frac{2\eps}{\kappa} \Big)(1+\eps)/2+\exp\Big (\frac{2\eps}{\kappa} \Big)(1-\eps)/2 -1\Big)
\]
For $b = \frac{\kappa}{\eps}$, 
\begin{align*}
K^+(t) =O \left(   \frac{\kappa}{\eps} \nu (\eps) \right). 
\end{align*}
As $\eps \rightarrow 0$, the leading order term  of $\nu$ is given by
 \begin{align*}
&\nu(\eps)  \approx    \frac{\eps^2}{\kappa^2} (2 - 2 \kappa)  =   2\frac{\eps^4}{\kappa^2}
 \end{align*}
 and therefore, 
\begin{align}
K^+(t) =O \left(   \frac{\eps^3}{\kappa} \right) \approx O \left(   \eps^3 \right). \label{eq:K+_C1}
\end{align}

\subsubsection{$\xi^r + \eps t \leq -1 $}     \label{app:negative_xir}  The function $\varphi(y) = -y$ is linear for $y<0$ .  Therefore, 
\begin{align*}
&C:= \varphi ( y+ g_1) - \varphi   (y)  = g_1\varphi' (y) =-g_1~\text{and}~ \E_a[C] = -\eps
\end{align*} 
where there is no error term ($K^-(t) =0$) as a result of the linearity of $\varphi$. 

\subsubsection{$\xi^r +\eps t =0$}  \label{app:zero_xir} 
When $\xi^r=0$, we must argue a little differently because $\varphi$ is only piecewise smooth in $\xi^r$. (Indeed $ \varphi $ is only $C^1$ at $\xi^r+\eps t=0$.) But our method still works using the explicit values of $\varphi (g_1)$. Since  $\varphi (0) =0 $, 
\begin{align*}
&C:= \varphi (  g_1) - \varphi   (0)  = \varphi (  g_1)
\end{align*} 
and we have
 \begin{align}
& \E_a [C]= \Big(1 + b  \exp \Big(-\frac{2\eps}{\sigma} \Big)-b\Big) (1+\eps)/2 + (1-\eps)/2 = K^0(t) \label{eq:E_C}
\end{align}
For $\sigma = \kappa$, 
\begin{align}
  K^0(t)  = b \rho(\eps)+1. \label{eq:K0}
\end{align}
where 
\[
\rho (\eps) = \Big(   \exp \Big(-\frac{2\eps}{\kappa} \Big)-1\Big) (1+\eps)/2 
\]
For $b = \frac{\kappa}{\eps}$, 
\begin{align}
  K^0(t)  =  \frac{\kappa}{\eps}\rho (\eps)+1. 
\end{align}

When $\eps \rightarrow 0$, the leading order term of  $ \frac{\kappa}{\eps} \rho (\eps)$ is $\Big(\frac{\eps}{\kappa}-1\Big) (1+\eps) $. Using the definition of $\kappa=1-\eps^2$
  \begin{align*}
& \E_a [C] \approx \Big(\frac{\eps}{\kappa}-1\Big) (1+\eps) + 1 =  K^0(t).
\end{align*}
where 
\begin{align}
K^0(t) =O \left(  \frac{\eps^2}{1-\eps} \right)=O \left( \eps^2 \right). \label{eq:K0_C1} 
\end{align}

Combining the foregoing,  for all  $\xi^r+\eps t$
\[
\mathbb E_{a,p} [u(\eta+d\eta,  \xi +d \xi^r,t+1)]- u(\eta,  \xi, t) \leq K (t) )
 \]
where 
\begin{align} 
K(t)& = \E_a[A]+ \E_a[B] +\max (K^+, K^0, K^-) \nonumber \\
&= O \left( \sqrt {\kappa} |t|^{-\frac{3}{2}}+ \min \Big( \epsilon , \kappa | t |^{-\frac{1}{2}}\Big ) | t |^{-1}+  \frac{\kappa}{\eps} \nu (\eps)+   \frac{\kappa}{\eps} \rho(\eps) + 1  \right). \label{eq:K}
\end{align}
When $\eps \rightarrow 0$, the leading order term of $K(t)$ is 
\begin{align} 
K(t)&= O \left(  |t|^{-\frac{3}{2}}+ \eps^2 \right).\label{eq:Kasy}
\end{align}

\subsection{Approximation of $v$ by $u$ by induction}
\label{app:step2}
Lastly, we show that $v \leq u + E_1(t) $ (where the discretization error $E_1$ is defined below)  by induction backwards from the final time. In doing so, we are proving the associated regret is approximately $u$. If one accepts the use of our myopic player, then the bandit problem can be viewed as a Markov chain with $(\eta, \xi^r, \xi^h)$ as its state space; in this setting the PDE for $u$ is the backwards Kolmogorov equation associated with the scaling limit of this Markov chain. 

This proof is similar in character to the proof of Theorem 3 in \cite{kobzar}.   Specifically,  initialization of the induction follows from the fact that $u(\eta, \xi,0) =v(\eta, \xi, 0) + E_1(0)$ where the function $E$ is given by
\begin{align}
E_1(t) = \begin{cases}
0 & t=0 \\
C & t=-1\\
C+ \sum _{\tau = t}^{-2}  K(\tau) & t \leq -2
\end{cases} \label{eq:et}
\end{align}
for a constant $C$.  The inductive hypothesis is that
\[
 v(\eta, \xi, t+1) \leq  u(\eta,\xi ,t+1) +E_1(t+1) 
\]
Since $K(t) =E_1(t) - E_1(t+1)$,
\begin{subequations}{\label{eq:induct}}
\begin{align*}
 u  ( \eta,\xi, t) +E_1(t) &\geq  \mathbb E_{p^m,a} ~u  (\eta+d\eta,\xi+d\xi ,t+1) + E_1(t+1)  ~~~~~ \textbf{ [by \cref{eq:u_lb} ]} \\
& \geq   ~ \mathbb E_{p^m,a} ~ v(\eta+d\eta,\xi+d\xi, t+1)~~~~~~~~\textbf {[by the hypothesis]}\\
&=v(\eta, \xi, t).~~~~~~~~~~~~~~~~~~~~~~~~~~~~~~~~~~~~~~~~~~~~~~ \textbf{ [by \cref{eq:w_dp_b}] }
\end{align*}
\end{subequations}
We estimate $\sum_{\tau =t}^{-1} K(\tau)$ by an integral. We first consider the term $\min \Big( \epsilon , \kappa | t |^{-\frac{1}{2}}\Big ) | t |^{-1}$. Integrating from $t = -1$ to $- \tau$ such that $\epsilon = \kappa \tau ^{-\frac{1}{2}}$, i.e., where  $\epsilon \leq \kappa |t| ^{-\frac{1}{2}}$, and also separately from $ \tau -1$ to $-T$ where $\epsilon > \kappa |t| ^{-\frac{1}{2}}$ (or from $t = -1$ to $-T$ if  $\epsilon < \kappa T ^{-\frac{1}{2}}$) leads to the following cumulative error estimate
\begin{align}
 O \Big(  \epsilon\min ( \log (\kappa^2/ \eps ^2) +   1, \log T)\Big) \label{eq:min_estimate}
\end{align}
attributable to that term. Together with the other terms, we obtain 
\[
E_1(t) =O \Big(1+\sqrt {\kappa} +   \epsilon \min ( \log (\kappa^2/ \eps ^2) +   1, \log T) +  \big(\frac{\kappa}{\eps} \nu (\eps)+     \frac{\kappa}{\eps}\rho (\eps) +1\big)|t| \Big).
\]
When $\eps \rightarrow 0$, the leading order term of $E_1(t)$ is 
\[
O \Big(1+ \sqrt {\kappa} + \big(\frac{\eps^3}{\kappa}+  \frac{\eps^2}{1-\eps}\big)|t| \Big)=O \Big(1+ \eps^2 |t| \Big).
\]
 
\section{Proof of \cref{lemma:c}}
\label{app:c}
We will use the following function $f$ to analyze $u$
\begin{align}
\label{eq:sturm_ode_soln}
f(z) = \sqrt{\frac {2}{ \pi}} e^{-\frac{z^2}{2}} +z \text{erf} \left ( \frac{z}{\sqrt{2}} \right) ~~&\text{and} ~~\text{erf}(y)  = \frac {2}{\sqrt \pi }  \int_0^{y } e^{-s^2} ds.
\end{align}
As shown in Appendix J of \cite{kobzar}, $f$ solves 
$f(z) =  f''(z) + z f'(z)$ with  $\lim_{|z| \rightarrow \infty} \frac {f(z)}{ |z|}=1$. Therefore, $g(x,t) = \sqrt {-2 \kappa t} f \big (\frac {x}{\sqrt{-2 \kappa t}} \big)$ where $g(x,t)$ solves the 1D linear heat equation on $\mathbb R \times \mathbb R_{<0}$:
$ g_t + \kappa g_{xx} =0$ with $g(x,0) =|x|$. Therefore, $u^h$ can be expressed as: 
\[
u^h(\eta, z,t)   =\frac{1}{2} \left(\eta + \sqrt {-  2\kappa t} f \left (\frac {z}{\sqrt{- 2 \kappa t}} \right) \right )
\]
and
\[
\frac {1}{\sqrt {T} } u^h(0, 2\eps T, -T)  = \sqrt{ \frac{\kappa} {\pi} } \exp \left( - \frac{\epsilon^2T}{\kappa} \right)   +\epsilon \sqrt {T}~ \erf \left ( \epsilon \sqrt {\frac{T  }{\kappa}} \right). 
\]
Also
\[
\hat \varphi(\xi^r,t) =\int_{\R}  \Phi(\xi^r-s, t) \varphi (s) ds =  \sqrt {-  \kappa t} f \left (\frac {\xi^r}{\sqrt{-  \kappa t}} \right)+ b\int_{0}^{\infty}  \Phi(\xi^r-s, t) e^{-2 \frac{\epsilon}{ \sigma} s} ds -b \int_{0}^{\infty}  \Phi(\xi^r-s, t)  ds 
\]
where
\begin{subequations}{\label{eq:int_phi_n}}
\begin{align}
&\int_0^\infty  \Phi(\xi^r- s ,t) ds   = -  \int_{\frac{\xi^r}{\sqrt {-2 \kappa t}}}^\infty  e^{-s^2} ds = \frac{1}{2}\Big(1+ \text{erf} \Big({\frac{\xi^r}{\sqrt {-2 \kappa t}}}\Big)\Big), ~\text{and} \\
&\int_0^\infty  \Phi(\xi^r- s ,t) e^{-2 \frac{\epsilon}{\sigma} s} ds  = \frac{1}{2}e^{-2 \frac{\epsilon}{\sigma^2} (\sigma \xi^r +\eps \kappa t)} \Big(1+\text{erf}  \left (\frac{\sigma \xi^r+2\epsilon \kappa t}{\sigma\sqrt {-2\kappa t}} \right)\Big).
\end{align}
\end{subequations}
Therefore, 
\begin{align*}  
\frac{1}{\sqrt{T}}\hat \varphi ( \eps T,-T) =&  \sqrt { \frac{2 \kappa}{ \pi}} e^ {-\frac{\epsilon^2 T}{2\kappa}} + \left( \epsilon \sqrt {T} -\frac{b}{  2 \sqrt {T}}\right)\text{erf}  \left (\epsilon \sqrt{\frac{T}{ 2\kappa}} \right)  +  \frac{b}{2}\Big(e^{-2 \frac{\epsilon^2}{\sigma^2} (\sigma -  \kappa) T}\Big(1+\text{erf}  \left (\frac{ \eps (\sigma -2  \kappa) \sqrt {T}  }{\sigma\sqrt {-2\kappa }} \right)\Big) -1\Big).
\end{align*} 
Combining the foregoing results we obtain
\begin{align}
 \frac {1}{\sqrt {T} } u (0, \eps T \mathbbm 1 ,-T) =&  \sqrt{ \frac{\kappa} {\pi} } \exp \left( - \frac{\epsilon^2T}{\kappa} \right)   +\epsilon \sqrt {T}~ \erf \left ( \epsilon \sqrt {\frac{T  }{\kappa}} \right) +\left(\frac{b}{ 2 \sqrt{T}} - \epsilon \sqrt{T} \right) ~\erf  \left (\epsilon  \sqrt{\frac{T}{2\kappa}} \right)    \nonumber \\
 &   -  \sqrt {\frac{2\kappa}{  \pi}} \exp \left(- \frac{\epsilon^2 T}{2\kappa} \right)+-  \frac{b}{2}\Big(e^{-2 \frac{\epsilon^2}{\sigma^2} (\sigma -  \kappa) T}\Big(1+\text{erf}  \left (\frac{ \eps (\sigma -2  \kappa) \sqrt {T}  }{\sigma\sqrt {-2\kappa }} \right)\Big) -1\Big). \label {eq:u0}
 \end{align}
When $\epsilon \rightarrow 0$  and 
$\gamma = \epsilon \sqrt {T}$  , since $\kappa \rightarrow 1$ and $\sigma \rightarrow 1$ , the  assertion of  \cref{lemma:c_sigma} follows if the leading order term of $b$ is $\frac{1}{\eps}$. 
 
When $\kappa = \sigma$,
\begin{align}
 \frac {1}{\sqrt {T} } u (0, \eps T \mathbbm 1 ,-T) =&  \sqrt{ \frac{\kappa} {\pi} } \exp \left( - \frac{\epsilon^2T}{\kappa} \right)   +\epsilon \sqrt {T}~ \erf \left ( \epsilon \sqrt {\frac{T  }{\kappa}} \right) \nonumber \\
 & +\left(\frac{b}{ \sqrt{T}} - \epsilon \sqrt{T} \right) ~\erf  \left (\epsilon  \sqrt{\frac{T}{2\kappa}} \right)    -  \sqrt {\frac{2\kappa}{  \pi}} \exp \left(- \frac{\epsilon^2 T}{2\kappa} \right). \label {eq:u0}
 \end{align}
When $\epsilon \rightarrow 0$  and 
$\gamma = \epsilon \sqrt {T}$, since $\kappa \rightarrow 1$, the assertion of  \cref{lemma:c} follows if the leading order term of $b$ is $\frac{1}{\eps}$

 \section{Proof of \cref{thm:lb2}} 
\label{app:lb2}
We shall show that by making a slightly different choice of the constant $b$ in the
definition of $\varphi$ (so that $\varphi$ and $u$ are no longer $C^1$ at $\xi^r = 0$) and setting $\sigma \neq \kappa$,
the arguments we used in \cref{app:lb} give the same leading-order estimate for 
the final-time regret, with a better error term. 

When $\varphi$ is not $C^1$ at $0$, the computation of  
 \begin{align*}
& \E_a [A]=  O(\sqrt {\kappa} |t|^{-\frac{3}{2}} ).
\end{align*} 
in \cref{eq:EA} is unchanged. However, instead of \cref{eq:EhatK}, we have
\begin{align*}
\E_a [\hat K]  =& O(\kappa \partial^4_{\xi^r} \hat \varphi+ \hat \varphi_{tt}) = O((1+b \frac{\eps}{\sigma}) \sqrt {\kappa} |t|^{-\frac{3}{2}} )
\end{align*} and therefore
 \begin{align*}
& \E_a [B]=  O((1+b\eps{\sigma}) \sqrt {\kappa} |t|^{-\frac{3}{2}} )
\end{align*} 

Next in  \cref{app:positive_xir} we modify the calculation of $K^+$ after \eqref{eq:Ea_zeta} as follows.   To determine the value of $\sigma$ that eliminates the discretization error, we set ${\sigma} = E_a [\zeta] $. This leads to
      \begin{align*}
\exp\Big (-\frac{2\eps} {\sigma} \Big)(1+\eps)/2+\exp\Big (\frac{2\eps}{\sigma} \Big)(1-\eps)/2= 1
 \end{align*}
which is solved by
\begin{align*}
\sigma = 2\eps \Big / \log  \Big(\frac{1+\eps}{1-\eps} \Big) 
 \end{align*}
With this choice of $\sigma$,  
 \[
 \eps \varphi'  + E_a [\zeta]  \varphi '' =\eps \varphi'  + \frac{\sigma}{2} \varphi ''=\eps
 \]
 Therefore, 
 \begin{align*}
  \E_a [C]&=  \eps \varphi'  + \frac{\sigma}{2} \varphi '' =\eps  +   K^+(t).
\end{align*}
where 
\begin{align}
K^+(t) =0  \label {eq:K+_C0}
\end{align}
Also the analysis in  \cref{app:negative_xir} is unchanged and we still have $K^-(t) =0$. 

Next in  \cref{app:zero_xir}, we modify the calculation after \eqref{eq:E_C} as follows. Since
\begin{align*}
\rho(\eps) =\Big(   \exp \Big(-\frac{2\eps}{\sigma } \Big)-1\Big) (1+\eps)/2 = \Big( \frac{1-\eps}{1+\eps} -1\Big) (1+\eps)/2 = -\eps
\end{align*}
\eqref{eq:K0} becomes
\begin{align}
  K^0(t)  = 1-\eps b. 
\end{align}
and for $b=1/\eps$
\begin{align}
  K^0(t)  = 0 \label {eq:K0_C0}. 
\end{align}

Combining the foregoing,  for all  $\xi^r+\eps t$, 
\[
\mathbb E_{a,p} [u(\eta+d\eta,  \xi +d \xi^r,t+1)]- u(\eta,  \xi, t) \leq K (t) 
 \]
where instead of \eqref{eq:K}, 
\begin{align*}
K(t) &= \E_a[A]+ \E_a[B] +\max (K^+, K^0, K^-)\\
 &= O \left(  (1+b\frac{ \eps}{\sigma}) \sqrt {\kappa} |t|^{-\frac{3}{2}}  \right)\\
  &= O \left( (1+\frac{1}{\sigma}) \sqrt {\kappa} |t|^{-\frac{3}{2}}   \right).
\end{align*}  
since $\max (K^+, K^0, K^-) =0$.
 Therefore, as $\eps \rightarrow 0$, we have
 \begin{align*}
K(t) &= O \left(  |t|^{-\frac{3}{2}}   \right).
\end{align*}  

Note that since the leading order behavior of $b$ as $\eps \rightarrow 0$ is unchanged  (it is still $1/\eps$), this choice of $b$ does not affect the leading-order behavior of $u(0 ,\eps T \mathbbm 1,-T)/\sqrt{T}$ as $T \rightarrow \infty$, i.e. the value of $c(\gamma)$ is unchanged.  But the errors  $ K^+$ for $\xi^r +\eps t \geq 1$ and  $K^0$ at $\xi^r +\eps t =0$  have been reduced as shown above from   $O(\eps^3)$ in \cref {eq:K+_C1} and $O(\eps^2)$ in \cref {eq:K0_C1} to zero in \cref{eq:K+_C0} and \cref{eq:K0_C0}. (The revised choice of $\varphi$ does not affect our arguments for $\xi^r +\eps t \leq -1$ where $K^-$ is still zero.) This improves the overall error since the only error term is now $O (|t|^{-\frac{3}{2}}) $. Bounding $\sum_{\tau =t}^{-1} K(\tau)$ by an integral, we obtain
\[
E_0(t) =O \left( 1+ (1+\frac{1}{\sigma}) \sqrt {\kappa}   \right).
\]
or, when $\eps \rightarrow 0$,
\[
E_0(t) =O \Big(1   \Big).
\]

\bibliographystyle{alpha}
\bibliography{expert_bounds_article}

\end{document}